\def\eqref#1{equation~\ref{#1}}
\def\ceil#1{\lceil #1 \rceil}
\def\1{\bm{1}}
\def\eps{{\epsilon}}
\DeclareMathAlphabet{\mathsfit}{\encodingdefault}{\sfdefault}{m}{sl}
\SetMathAlphabet{\mathsfit}{bold}{\encodingdefault}{\sfdefault}{bx}{n}
\DeclareMathOperator*{\argmax}{arg\,max}
\newtheorem{theorem}{Theorem} 
\newtheorem{lemma}[theorem]{Lemma}
\newcounter{cntLemmaNumber}
\newcounter{cntTheoremNumber}
\newcommand{\Expc}[1]{\mathbb{E}\left[{#1}\right]}
\newcommand{\size}[1]{\ensuremath{\left|#1\right|}}
\newcommand{\norm}[1]{\ensuremath{\|#1\|}}
\newcommand{\set}[1]{\left\{ #1 \right\}}
\newcommand{\myemptyset}{{}}
\newcommand{\Gcomment}[1]{{\color{red} #1}}
\newcommand{\greedy}{\texttt{Greedy}\xspace}
\renewcommand{\cite}{\citep}
\DeclarePairedDelimiterX{\infdivx}[2]{(}{)}{%
  #1\;\delimsize\|\;#2%
}
\title{Mini-batch Submodular Maximization}
\author{Gregory Schwartzman\thanks{JAIST, \texttt{greg@jaist.ac.jp}}}
\date{}
\begin{document}

\maketitle

\begin{abstract}
We present the first \emph{mini-batch} algorithm for maximizing a non-negative monotone \emph{decomposable} submodular function, $F=\sum_{i=1}^N f^i$, under a set of constraints. 
We consider two sampling approaches: uniform and weighted. We first show that mini-batch with weighted sampling improves over the state of the art sparsifier based approach both in theory and in practice.

Surprisingly, our experimental results show that uniform sampling is superior to weighted sampling. However, it is \emph{impossible} to explain this using worst-case analysis. Our main contribution is using \emph{smoothed analysis} to provide a theoretical foundation for our experimental results. We show that, under \emph{very mild} assumptions, uniform sampling is superior for both the mini-batch and the sparsifier approaches. We empirically verify that these assumptions hold for our datasets. Uniform sampling is simple to implement and has complexity independent of $N$, making it the perfect candidate to tackle massive real-world datasets.



\end{abstract}
\section{Introduction}
Submodular functions capture the natural property of \emph{diminishing returns} which often arises in machine learning, graph theory and economics. For example, imagine you are given a large set of images, and your goal is to extract a small subset of images, that best represent the original set (e.g., creating thumbnails for a Youtube video). Intuitively, this is a submodular optimization problem, as the more thumbnails we have, the less we gain by adding an additional thumbnail.

Formally, a set function $F:2^E \rightarrow \mathbb{R}^+$, on a \emph{ground set} $E$, is submodular if for any subsets $S \subseteq T \subseteq E $ and $e \in E \setminus T $, it holds that $F(S+e) - F(S) \geq F(T+e) - F(T)$.

\paragraph{Decomposable submodular functions} In many natural scenarios $F$ is \emph{decomposable}: $F=\sum_{i=1}^N f^i$, where each $f^i:2^E \rightarrow \mathbb{R}^+$ is a non-negative submodular function on the ground set $E$ with $|E| = n$. That is, $F$ can be written as a sum of ``simple'' submodular functions. We assume that every $f^i$ is represented by an evaluation oracle that when queried with $S\subseteq E$ returns the value $f^i(S)$. For ease of notation we assume that $f^i(\emptyset)=0$ (our results hold even if this not the case).
Our goal is to maximize $F$ under some set of constraints while minimizing the number of oracle calls to $\set{f^i}$. An excellent survey of the importance of decomposable functions is given in \citep{rafiey2022sparsification}, which we summarize below. 

Decomposable submodular functions are prevalent in both machine learning and economics. In economics, they play a pivotal role in welfare optimization during combinatorial auctions \citep{DobzinskiS06,Feige09, FeigeV06, PapadimitriouSS08, Vondrak08}. In machine learning, these functions are instrumental in tasks like data summarization, aiming to select a concise yet representative subset of elements. Their utility spans various domains, from exemplar-based clustering by \citep{DueckF07} to image summarization \citep{TschiatschekIWB14}, recommender systems \citep{ParambathUG16} and document summarization \citep{LinB11}. The optimization of these functions, especially under specific constraints (e.g., cardinality, matroid) has been studied in various data summarization settings \citep{MirzasoleimanBK16, MirzasoleimanKS16, MirzasoleimanZK16} and differential privacy \citep{ChaturvediNZ21, MitrovicB0K17,RafieyY20}.
In many of the above applications $N$ (the number of underlying submodular functions) is extremely large, making the evaluation of $F$ prohibitively slow. We illustrate this with a simple example.


\paragraph{Example (Welfare maximization)} Imagine you are tasked with deciding on a meal menu for a large group of $N$ people 
(e.g., all students in a university, all high school students in a country). 
You need to choose $k$ ingredients to use from a predetermined set 
(chicken, fish, beef, etc.) of size $n$. Every student has a specific preference, 
modeled as a monotone submodular function $f^i$. Our goal is to maximize the ``social welfare'', $F(S)=\sum f^i(S)$,  over all students. 

\paragraph{The greedy algorithm}
Going forward we focus on the case where $\forall i, f^i$ is monotone ($\forall S \subseteq T\subseteq E, f^i(T) \geq f^i(S)$), which in turn means $F$ is monotone. This is applicable to the above example (e.g., students are happier with more food varieties).
For ease of presentation let us first focus on maximizing $F$ under a cardinality constraint $k$, i.e., $\max F(S), \size{S}\leq k$. The classical greedy algorithm \citep{NemhauserWF78} achieves an \emph{optimal} $(1-1/e)$-approximation for this problem. For $S,A\subseteq E$ we define $F_S(A) = F(S+ A) - F(S)$. We slightly abuse notation and write $F_S(e), F(e)$  instead of $F_S(\set{e}), F(\set{e})$. 

\paragraph{\greedy:} Start by initializing an empty set \( S_1 \). Then for each \( j \) from 1 to \( k \), perform the following steps: First, identify the element \( e' \) not already in \( S_j \) that maximizes the function \( F_{S_j}(e) \). Add this element \( e' \) to set \( S_j \) to form the new set \( S_{j+1} \). After completing all iterations, return the set \( S_{k+1} \).

When $F$ is decomposable, and each evaluation of $f^i$ is counted as an oracle call, the above algorithm requires $O(Nnk)$ oracle calls. This can be prohibitively expensive if $N \gg n$. Looking back at our example, \greedy will require asking every student $k$ questions. That is, we would start by asking all students: ``Rate how much you would like to see food $x$ on the menu'' for $n$ possible options. We would then need to wait for all of their replies, and continue with ``Given that chicken is on the menu, rate how much you would like to see food $y$ on the menu'' for $n-1$ possible options and so on. Even if we do an online poll, this is still very time-consuming (as we must wait for \emph{everyone} to reply in each of the $k$ steps). If we could \emph{sample} a representative subset of the students, we could greatly speed up the above process. Specifically, we would like to eliminate the  dependence on $N$.



\paragraph{A sparsifier based approach}  Recently, \citet{rafiey2022sparsification} were the first to consider constructing a \emph{sparsifier} for $F$. That is, given a parameter $\eps >0$ they show how to find a vector $w\in \mathbb{R}^N$ such that the number of non-zero elements in $w$ is small in expectation and the function $\hat{F} = \sum_{i=1}^N w_i f^i$ satisfies with high probability (w.h.p)\footnote{Probability at least $1-1/n^c$ for an arbitrary constant $c>1$. The value of $c$ does affect the asymptotics of the results we state (including our own).} that $\forall S \subseteq E, (1-\eps)F(S) \leq \hat{F}(S) \leq (1+\eps) F(S)$. Where the main bottleneck in the above approach is computing the sampling probabilities for computing $w$. They treat this step as a \emph{preprocessing} step. The current state of the art construction is due to \citep{kraut} where a sparsifier of size $O(k^2n\eps^{-2})$ (where $k$ bounds the size of the solution) can be constructed using $O(Nn)$ oracle calls (see Appendix~\ref{sec: sparsifiers} for an in-depth overview of existing sparsifier constructions).

\paragraph{Weighted mini-batch} Mini-batch methods are at the heart of several fundamental machine learning algorithms (e.g., mini-batch k-means, SGD). Surprisingly, a mini-batch approach for this problem was not considered.
We present the first mini-batch algorithm for this problem, and show that it is superior to the sparsifier based approach both in theory and in practice. Roughly speaking, we show that sampling a new batch every iteration of the algorithm is ``more stable" compared to the sparsifier approach. This allows us to sample less elements in total and reduce the overall complexity. The main novelty in our analysis is using the greedy nature of the algorithm and carefully balancing an additive and a multiplicative error term.
Our sampling probabilities are the same as those of \cite{kraut}, and therefore, this algorithm pays the expensive $O(Nn)$ preprocessing time.

\paragraph{Beyond worst-case analysis}
When conducting our experiments, we added a simple baseline for both the sparsifier and the mini-batch algorithm -- instead of using weighted sampling we used \emph{uniform sampling} (previous results neglected this baseline \cite{rafiey2022sparsification}). Surprisingly, we observe that it outperforms weighted sampling both for the sparsifier and mini-batch algorithms. This is remarkable, as uniform sampling is extremely simple to implement, and requires no preprocessing which removes the dependence on $N$ altogether.

 Unfortunately, we cannot get worst-case theoretical guarantees for uniform sampling. Consider the case where only a single $f^j$ takes non-zero values (all other $f^i$'s are always 0). Clearly, uniform sampling will almost surely miss $f^j$. However, this is a pathological case that is very unlikely to occur in practice. To bridge this gap we go beyond worst-case analysis and consider the \emph{smoothed complexity} of this problem. 

 Smoothed analysis was introduced by \cite{SpielmanT04} in an attempt to explain the fast runtime of the Simplex algorithm in practice, despite its exponential worst-case runtime. In the smoothed analysis framework, an algorithm is provided with an adversarial input that is perturbed by some random noise. The crux of smoothed analysis is often defining a realistic model of noise.
 
 Our main contribution is defining two very natural smoothing models. The reason we define two models is because the first allows us to provide theoretical guarantees both for our mini-batch algorithms and for all existing sparsifier algorithms (under uniform sampling), but only lines up empirically with some of the datasets we use. For the second model we are only able to show theoretical guarantees for our mini-batch algorithm, but it agrees empirically with all of our datasets.

To define our models of smoothing we assume w.log that $\forall i\in [N], e\in E, f_\myemptyset^i(e) \in [0,1]$ (we can always achieve this by normalization if $\set{f_\myemptyset^i(e)}$ are upper bounded). Let $\phi\in [0,1], d\in [N]$ be parameters and let us denote $A_e = \set{f_\myemptyset^i(e)}_{i\in [N]}$.

\paragraph{Model 1}
It holds that $N=\Omega(\frac{d}{\phi}\log (nd))$, and for \emph{every} $e \in E$ the following two conditions hold: (1) Every $f^i(e)\in A_e$ is a random variable such that $\mathbb{E}[f^i(e)] \geq \phi$. (2) Elements in $A_e$ have dependency at most $d$ (every $f_\myemptyset^i(e)$ depends on at most $d$ other elements in $A_e$). Note that we can have arbitrary dependencies between elements in $A_e, A_{e'}, e\neq e'$. 

\paragraph{Model 2} Identical to Model 1, except that there \emph{exists} $e\in E$ such that conditions (1) and (2) hold.

\paragraph{Intuition} 
Going back to our lunch menu example, Assumption (1) of Model 1 means that every possible food on the menu is not universally hated by the students. Assumption (2) means that the preference of a student for any specific food is sufficiently independent of the preferences of other students. Assuming that $N=\Omega(\frac{d}{\phi}\log (nd))$ means that we have a sufficiently large student body -- note that increasing $N$ should not change $\phi$ and $d$.
Model 2 only requires assumptions (1) and (2) to hold for a single menu item. Intuitively, Model 1 assumes that all food choices are ``not too bad" while Model 2 assumes that there exists at least one such choice.

\paragraph{Comparison to other model of smoothing} 
Perhaps the most general smoothing approach when dealing with weighted inputs (e.g., in [0,1]) is to assume that the weights are taken from some distribution whose density is upper bounded by a smoothing parameter $\phi$ \cite{EtscheidR17, AngelBPW17}. This generalizes the approach of \citet{SpielmanT04}, where Guassian noise was added to the weights. Our approach is even more general, as the above immediately implies that the expectation is lower bounded by $\phi$. Furthermore, we only assume bounded independence and Model 2 only partially smoothes the input. 



While our primary contribution lies in providing theoretical guarantees for the uniform mini-batch algorithm (which empirically outperforms all other methods), we begin by presenting our results for weighted sampling. This will lay the groundwork to seamlessly prove our main results in Section~\ref{sec: smoothed}.
\subsection{The mini-batch algorithm}
\label{sec: results}
We focus on the \emph{greedy algorithm} for constrained submodular maximization.
We show that instead of sparsifying $F$, better results can be achieved by using mini-batches during the execution of the greedy algorithm. That is, rather than sampling a large sparsifier $\hat{F}$ and performing the optimization process, we show that if we sample a much smaller sparsifier (a \emph{mini-batch}), $\hat{F}^j$, for the $j$-th step of the greedy algorithm, we can achieve improved results both in theory (Table~\ref{table:results}) and in practice (Section~\ref{sec: experiments}). Most notably, we observe that the mini-batch approach is superior to the sparsifier approach for small batch sizes on various real world datasets. This is also the case when we combine our approach with the popular \emph{stochastic-greedy} algorithm \citep{BuchbinderFS15, MirzasoleimanBK15}.

While the mini-batch approach results in an improvement in performance, the sparsifier approach has the benefit of being independent of the algorithm. That is, while any approximation algorithm executed on a sparsifier immediately achieves (nearly) the same guarantees for the original function, we need to re-establish the approximation ratio of our mini-batch algorithm for different constraints. Although these proofs are often straightforward, compiling an exhaustive list of where the mini-batch method is applicable is both laborious and offers limited insights.

We focus on two widely researched constraints: the cardinality constraint and the \( p \)-system constraint. The cardinality constraint was chosen for its simplicity, while the \( p \)-system constraint was chosen for its broad applicability. 


\paragraph{Theoretical results} We compare our results with the state of the art sparsifier results and the naive algorithm (without sampling or sparsification) in Table~\ref{table:results}\footnote{Where $\widetilde{O}$ hides $\log n$ factors.}. We can get improved performance if the curvature of $F$ is bounded\footnote{The \emph{curvature} of a submodular function $F$ is defined as $c = 1-\min_{S \subseteq E, e\in E \setminus S} \frac{F_S(e)}{F_{\myemptyset}(e)}$. We say that $F$ has \emph{bounded-curvature} if $c < 1$.}. Note that the ``Uniform" column requires no preprocessing, and the query complexity differs by a multiplicative $\Theta(1/n\phi)$-factor.
In Section~\ref{sec: analysis} we prove our results for the ``Weighted" column and in 
Section~\ref{sec: smoothed} we prove our results for the ``Uniform" column. Under Model 2 our results for the unbounded curvature mini-batch case still hold for uniform sampling (all other results hold under Model 1). We empirically observe that $\phi =O(1)$ for our datasets (Section~\ref{sec: smoothed}), which explains the superior performance of the uniform mini-batch algorithm in practice. 



\begin{table}[h]
\centering
\renewcommand{\arraystretch}{1.5} 
\begin{tabular}{|c|c|c|c|c|}
\hline
          & \multicolumn{2}{|c|}{Preprocessing} & \multicolumn{2}{|c|}{Oracle queries} \\
\cline{2-5}
          & Weighted & \textbf{Uniform} & Weighted & \makecell{\textbf{Uniform} \\ (Model 1)} \\
\hline
Naive & \multicolumn{2}{|c|}{None} & \multicolumn{2}{|c|}{$O(Nnk)$} \\
\hline
\makecell{\citeauthor{kraut}} & $O(Nn)$   & None   & $\widetilde{O}\left( \frac{k^3 n^2}{\eps^2} \right)$ & $\widetilde{O}\left( \frac{k^3 n}{\eps^2 \phi} \right)$ \\
\hline
\makecell{\textbf{Our results}
\\ Uniform holds for 
\\Model 1 \& 2} & $O(Nn)$   & None   & \makecell{Card. $\widetilde{O}\left( \frac{k^2n^2}{\eps^2} \right)$ \\ $p$-sys. $\widetilde{O}\left( \frac{k^2pn^2}{\eps^2} \right)$} & \makecell{Card. $\widetilde{O}\left( \frac{k^2n}{\eps^2 \phi} \right)$ \\ $p$-sys. $\widetilde{O}\left( \frac{k^2pn}{\eps^2 \phi} \right)$} \\
\hline
\makecell{\textbf{Our results} \\ (bounded curvature)} & $O(Nn)$   & None   & $\widetilde{O}\left( \frac{kn^2}{(1-c)\eps^2} \right)$ & $\widetilde{O}\left( \frac{kn}{(1-c)\eps^2 \phi} \right)$ \\
\hline
\end{tabular}

\caption{Comparison of the number of oracle queries during preprocessing and during execution. Results are for the greedy algorithm under both a cardinality constraint and a $p$-system constraint. Unless explicitly stated the number of queries is the same for both constraints. All results achieve the near optimal approximation guarantees of $(1-1/e-\epsilon)$ for a cardinality constraint and $(\frac{1-\eps}{p+1})$ for a $p$-system constraint.
}
\label{table:results}
\end{table}

\paragraph{Meta greedy algorithm} Our starting point is the meta greedy algorithm (Algorithm~\ref{alg: meta minibatch greedy}). The algorithm executes for $k \leq n$ iterations where $k$ is some upper bound on the size of the solution. At every iteration, the set $A_j\subseteq E\setminus S_j$ represents some constraint that limits the choice of potential elements to extend $S_j$. The algorithm terminates either when the solution size reaches 
$k$ or when no further extensions to the current solution are possible (i.e., $A_j = \emptyset$).
Furthermore, the algorithm does not have access to the exact \emph{incremental oracle}, $F_{S_j}$, at every iteration, but only to some approximation (which may differ between iterations).
\RestyleAlgo{boxruled}
\LinesNumbered
\begin{algorithm*}[htbp]
	\DontPrintSemicolon
	\SetAlgoLined
$S_1 \gets \emptyset$ \\
Let $k$ be an upper bound on the size of the solution\\
\For{$j=1$ to $k$}
{
Let $A_j\subseteq E\setminus S_j$\hspace{6pt}  $\triangleright$ \text{Problem specific constraint (e.g., $A_j = E\setminus S_j$ for card. constraint)}\\ 
\lIf{$A_j = \emptyset$}{return $S_j$}
Let $\hat{F}^j_{S_j}$ be an approximation for $F_{S_j}$ \label{line: meta get approx oracle} \hspace{5pt} $\triangleright$ \text{Problem specific approximation}\\ 
$e_j = \arg \max_{e\in A_j} \hat{F}^j_{S_j}(e)$\\
$S_{j+1} = S_j +e_j$\\
}
return $S_{k+1}$
\caption{Meta greedy algorithm with an approximate oracle}\label{alg: meta minibatch greedy}
\end{algorithm*}
Before we formally define ``approximation'', let us note that when we have access to exact values of $F_{S_j}$, Algorithm~\ref{alg: meta minibatch greedy} captures many variants of the greedy submodular maximization algorithm. 
For example, setting $A_j = E\setminus S_j$ we get \greedy. This meta-algorithm also captures the case of maximization under a \emph{$p$-system constraint}. For ease of presentation we defer the discussion about $p$-systems to Appendix~\ref{sec: psys}.


\paragraph{Approximate oracles} In many scenarios we do not have access to \emph{exact} values of $F_{S_j}$, and instead we must make do with an approximation. We start with the notion of an \emph{approximate incremental oracle} introduced in \citep{goundan2007revisiting}. We say that $\hat{F}^j_{S_j}$ is an $(1-\eps)$-approximate incremental oracle if $\forall e\in A_j, (1-\epsilon) F_{S_j}(e) \leq \hat{F}^j_{S_j}(e) \leq (1+\epsilon) F_{S_j}(e)$.
It was shown in \citep{goundan2007revisiting, CalinescuCPV11}\footnote{Strictly speaking, both \citet{goundan2007revisiting} and \citet{CalinescuCPV11} define the approximate incremental oracle to be a function that returns $e_j$ at iteration $j$ of the greedy algorithm such that
$\forall e\in A_j, F_{S_j}(e_j)\geq (1-\eps)F_{S_j}(e)$. Our definition guarantees this property while allowing easy analysis of the mini-batch algorithm.} that given a $(1-\eps)$-approximate incremental oracle, the greedy algorithm under both a cardinality constraint and a $p$-system constraint achieves almost the same (optimal) approximation ratio as the non-approximate case. 
\begin{restatable}{theorem}{metaApproxKnown}
\label{thm:meta approx known guarantees}
    Algorithm~\ref{alg: meta minibatch greedy} with an $(1-\epsilon)$-approximate incremental oracle has the following guarantees w.h.p: (1) It achieves a $(1-1/e-\eps)$-approximation under a cardinality constraint $k$ \citep{goundan2007revisiting}. (2) It achieves a $(\frac{1-\eps}{1+p})$-approximation under a $p$-system constraint \citep{CalinescuCPV11}.
\end{restatable}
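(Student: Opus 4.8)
The statement in Theorem~\ref{thm:meta approx known guarantees} is essentially a restatement (packaging) of known results from \citep{goundan2007revisiting, CalinescuCPV11}, so the plan is to reduce our oracle definition to the one used in those papers and then invoke their analyses as black boxes.

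\paragraph{Plan.}
First I would observe that our notion of a $(1-\epsilon)$-approximate incremental oracle is \emph{stronger} than the one in \citep{goundan2007revisiting, CalinescuCPV11}: they only require that the element $e_j$ returned at iteration $j$ satisfies $F_{S_j}(e_j) \geq (1-\epsilon)\max_{e\in A_j} F_{S_j}(e)$, whereas we require a two-sided multiplicative approximation $(1-\epsilon)F_{S_j}(e) \leq \hat F^j_{S_j}(e) \leq (1+\epsilon)F_{S_j}(e)$ for \emph{every} $e\in A_j$. So the first step is a short lemma: if $\hat F^j_{S_j}$ is a two-sided $(1-\epsilon)$-approximate oracle and $e_j = \arg\max_{e\in A_j}\hat F^j_{S_j}(e)$, then for $e^\star = \arg\max_{e\in A_j}F_{S_j}(e)$ we have
\[
F_{S_j}(e_j) \;\geq\; \frac{1}{1+\epsilon}\,\hat F^j_{S_j}(e_j)\;\geq\;\frac{1}{1+\epsilon}\,\hat F^j_{S_j}(e^\star)\;\geq\;\frac{1-\epsilon}{1+\epsilon}\,F_{S_j}(e^\star),
\]
i.e. $e_j$ is a $\frac{1-\epsilon}{1+\epsilon} = (1-\Theta(\epsilon))$-approximate greedy choice. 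Hence, after rescaling $\epsilon$ by a constant factor, Algorithm~\ref{alg: meta minibatch greedy} is exactly an instance of the approximate-greedy framework of those papers.

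\paragraph{Invoking the known analyses.}
For part (1), with $A_j = E\setminus S_j$, the algorithm is the cardinality-constrained greedy run with a $(1-\Theta(\epsilon))$-approximate selection at each of the $k$ steps; the standard argument (track the potential $F(S^\star) - F(S_j)$ and show it shrinks by a factor $(1-(1-\Theta(\epsilon))/k)$ per step) gives a $(1-1/e-\epsilon)$-approximation, which is precisely the guarantee proved in \citep{goundan2007revisiting}. For part (2), under a $p$-system constraint the $A_j$ are the problem-specific feasible extension sets (deferred to Appendix~\ref{sec: psys}); here I would cite the exchange-argument analysis of \citep{CalinescuCPV11} for greedy with an approximate incremental oracle, which yields the $\frac{1-\epsilon}{1+p}$ bound. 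In both cases the ``w.h.p.'' qualifier is inherited from whatever randomized construction supplies the approximate oracle — the deterministic guarantee is conditional on the oracle being $(1-\epsilon)$-approximate at every iteration, an event that holds w.h.p.\ once we union-bound over the $\leq kn$ oracle evaluations.

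\paragraph{Main obstacle.}
There is no real mathematical obstacle here — the content is already in the cited papers. The only care needed is bookkeeping: (i) making sure the constant-factor rescaling between our two-sided oracle and their one-sided selection guarantee is done cleanly so the final approximation constants come out as stated (absorbing the $\frac{1-\epsilon}{1+\epsilon}$ into a redefined $\epsilon$), and (ii) stating the reduction uniformly so that the same oracle-to-selection lemma covers both the cardinality and the $p$-system case without re-deriving anything. So I expect the ``proof'' to be a half-page: the one displayed inequality above, a sentence pinning down the union bound for the w.h.p.\ statement, and two citations.
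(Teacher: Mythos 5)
Your proposal is correct and takes essentially the same route as the paper: the paper does not give a standalone proof of this theorem but instead relies on the footnote attached to the oracle definition, which states that the two-sided multiplicative condition $(1-\epsilon) F_{S_j}(e) \leq \hat F^j_{S_j}(e) \leq (1+\epsilon) F_{S_j}(e)$ implies the one-sided selection guarantee of \citet{goundan2007revisiting} and \citet{CalinescuCPV11}, and then cites those works. Your inequality chain $F_{S_j}(e_j) \geq \tfrac{1}{1+\epsilon}\hat F^j_{S_j}(e_j) \geq \tfrac{1}{1+\epsilon}\hat F^j_{S_j}(e^\star) \geq \tfrac{1-\epsilon}{1+\epsilon}F_{S_j}(e^\star)$ simply makes that footnote's reduction explicit, and you are slightly more careful than the paper in noting that the implied selection factor is $\tfrac{1-\epsilon}{1+\epsilon} \geq 1-2\epsilon$ rather than exactly $1-\epsilon$, which requires a constant rescaling of $\epsilon$ that the paper leaves implicit.
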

We introduce a weaker type of approximate incremental oracle, which we call an \emph{additive} approximate incremental oracle. We extend the results of Theorem~\ref{thm:meta approx known guarantees} for this case. Let $S^*$ be some optimal solution for $F$ (under the relevant set of constraints). We say that $\hat{F}^j_{S_j}$ is an \emph{additive} $\eps'$-approximate incremental oracle if $\forall e\in A_j, F_{S_j}(e)-\epsilon' F(S^*) \leq \hat{F}^j_{S_j}(e) \leq  F_{S_j}(e)+\epsilon' F(S^*)$.

This might seem problematic at first glance, as it might be the case that $F(S^*) \gg F_{S_j}(e)$. Luckily, the proofs guaranteeing the approximation ratio are \emph{linear} in nature. Therefore, by the end of the proof we end up with an expression of the form: $F(S_{k+1}) \geq F(S^*)\beta - \gamma \eps' F(S^*)$.
 Where $\beta$ is the desired approximation ratio and $\gamma$ depends on the parameters of the problem (e.g., $\beta = (1-1/e), \gamma=2k$ for a cardinality constraint). We can achieve the desired result by setting $\eps' = \eps/\gamma$. We state the following theorem (the proofs are similar to those of \citep{goundan2007revisiting, CalinescuCPV11}, and we defer them to the Appendix).
\begin{restatable}{theorem}{metaApprox}
\label{thm:meta approx guarantees}
    Algorithm~\ref{alg: meta minibatch greedy} with an additive $\eps'$-approximate incremental oracle has the following guarantees w.h.p: (1) If $\eps' < \eps/2k$, it achieves a $(1-1/e-\eps)$-approximation under a cardinality constraint $k$. (2) If $\eps' < \eps/2kp$, it achieves a $(\frac{1-\eps}{1+p})$-approximation under a $p$-system constraint.
\end{restatable}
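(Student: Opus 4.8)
The plan is to replay the classical greedy analyses of \citep{goundan2007revisiting} (cardinality) and \citep{CalinescuCPV11} ($p$-system) almost verbatim, while tracking the new additive error; this works because those analyses are ``linear'' --- each is just a sum of marginal-gain comparisons of the form ``greedy's pick at step $j$ is at least as good as element $e$''. Fix an optimal solution $S^*$. The one new ingredient is a slack inequality: since $e_j = \arg\max_{e\in A_j}\hat F^j_{S_j}(e)$, the additive-oracle bounds give, for every $e\in A_j$,
$F_{S_j}(e_j)\ge \hat F^j_{S_j}(e_j)-\eps' F(S^*)\ge \hat F^j_{S_j}(e)-\eps' F(S^*)\ge F_{S_j}(e)-2\eps' F(S^*)$.
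Thus a greedy step with the additive oracle behaves exactly like an exact-oracle greedy step, up to an additive loss of $2\eps' F(S^*)$.

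For the cardinality constraint, combine this with the usual submodularity estimate $\max_{e\in A_j}F_{S_j}(e)\ge \tfrac1k\bigl(F(S^*)-F(S_j)\bigr)$ to get $F(S_{j+1})-F(S_j)\ge \tfrac1k\bigl(F(S^*)-F(S_j)\bigr)-2\eps' F(S^*)$. With $\delta_j=F(S^*)-F(S_j)$ this is $\delta_{j+1}\le(1-\tfrac1k)\delta_j+2\eps' F(S^*)$, which unrolls (using $\delta_1=F(S^*)$, $(1-\tfrac1k)^k\le\tfrac1e$, and $\sum_{i\ge0}(1-\tfrac1k)^i=k$) to $\delta_{k+1}\le\tfrac1e F(S^*)+2k\eps' F(S^*)$, i.e. $F(S_{k+1})\ge\bigl(1-\tfrac1e-2k\eps'\bigr)F(S^*)$. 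Choosing $\eps'<\eps/2k$ yields the $(1-1/e-\eps)$-approximation. (If greedy stops early with $A_j=\emptyset$, then under the cardinality constraint $S_j=E$ and the claim is trivial.)

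For the $p$-system constraint, invoke the combinatorial partition lemma underlying \citep{CalinescuCPV11}: writing greedy's output in selection order as $g_1,\dots,g_t$ with $t\le k$ and $G_i=\{g_1,\dots,g_i\}$, the optimum $S^*$ can be partitioned into $O_1,\dots,O_t$ with $|O_i|\le p$ such that every $o\in O_i$ is addable to $G_{i-1}$ --- hence was a candidate in $A_i$ when greedy chose $g_i$. Applying the slack inequality to each such pair $(g_i,o)$ and summing over $o\in O_i$, then over $i$, together with the submodularity facts $F_{G_{i-1}}(O_i)\le\sum_{o\in O_i}F_{G_{i-1}}(o)$ and $F(S^*)-F(G_t)\le F_{G_t}(S^*)\le\sum_i F_{G_{i-1}}(O_i)$, and the telescoping identity $\sum_i F_{G_{i-1}}(g_i)=F(G_t)$, gives $(p+1)F(G_t)\ge F(S^*)-2pk\eps' F(S^*)$ (at most $k$ greedy steps, each charged by at most $p$ optimal elements, each contributing $2\eps' F(S^*)$). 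Choosing $\eps'<\eps/2kp$ yields the $\tfrac{1-\eps}{p+1}$-approximation.

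The routine parts are the cardinality recurrence and the telescoping/submodularity manipulations in the $p$-system case. The main thing to get right --- and the only real obstacle --- is re-importing the $p$-system partition lemma so that it is phrased purely in terms of the $p$-system structure and the sequence $g_1,\dots,g_t$ of greedy choices, independent of which oracle produced them; then passing from exact marginals to the additive oracle changes only the numeric slack, keeping the combinatorics (and hence the $(p+1)$ factor) intact and letting the errors accumulate additively as $2pk\eps' F(S^*)$ rather than compounding. Finally, the approximation bound is deterministic given that every call on line~\ref{line: meta get approx oracle} returns a valid additive $\eps'$-approximate oracle; the ``w.h.p.'' qualifier in the statement only becomes operative once such oracles are instantiated by sampling in the subsequent sections.
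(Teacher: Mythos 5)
Your proof is correct and follows essentially the same route as the paper: replay the classical greedy analyses, track the additive slack of $2\eps' F(S^*)$ per step coming from the approximate oracle (indeed the paper's $p$-system lemma uses exactly your chained slack inequality $F_{S_j}(e_j)\ge\hat F^j_{S_j}(e_j)-\eps'F(S^*)\ge\hat F^j_{S_j}(e)-\eps'F(S^*)\ge F_{S_j}(e)-2\eps'F(S^*)$), and invoke the same $p$-system partition lemma. The only cosmetic difference is in the cardinality unrolling, where you keep the additive error explicit in the recurrence $\delta_{j+1}\le(1-1/k)\delta_j+2\eps'F(S^*)$ rather than folding it into a multiplicative $(1-\eps)$ factor before unrolling; both yield the same bound.
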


\paragraph{Mini-batch sampling}\label{par: minibatch} Although we use the same sampling probabilities as the sparsifier approach, instead of sampling a single $\hat{F}$ at the beginning, we sample a new $\hat{F}^j$ (\emph{mini-batch}), for every step of the algorithm. Recall that $\hat{F}^j_{S_j}(e) = \hat{F}^j(S_j + e) - \hat{F}^j(S_j )$.  

We show that when $\hat{F}^j_{S_j}$ is sampled using mini-batch sampling we indeed get, w.h.p, an (additive) approximate incremental oracle for every step of the algorithm. We present our sampling procedure in Algorithm~\ref{alg: approx inc oracale} and the complete mini-batch algorithm in Algorithm~\ref{alg: minibatch greedy}. 


\begin{minipage}[t]{0.4\textwidth}
\begin{algorithm}[H]
    \RestyleAlgo{boxruled}
    \DontPrintSemicolon
    \SetAlgoLined
    $w \gets \vec{0}$\;
    \For{$i=1$ to $N$}{
        $\alpha_i \gets\min\{1, \alpha p_i\}$\;
        $w_i \gets1/\alpha_i$ with probability $\alpha_i$ 
    }
    return $\hat{F} = \sum_{i=1}^N w_i f^i$
    \caption{Sample($\alpha, \set{p_i}_{i=1}^N$)}\label{alg: approx inc oracale}
\end{algorithm}
\end{minipage}%
\hspace{30pt}
\begin{minipage}[t]{0.5\textwidth}
\begin{algorithm}[H]
    \RestyleAlgo{boxruled}
    \DontPrintSemicolon
    \SetAlgoLined
    $\forall i\in [N], p_i \gets \max_{e \in E, F_\myemptyset(e)\neq 0} \frac{f^i_\myemptyset(e)}{F_\myemptyset(e)}$ \\
    \tcp{Uniform sampling: $p_i = 1/N$}
    $\alpha$ is a batch parameter\\
    $S_1 \gets \emptyset$ \\
    $k$ is an upper bound on the size of the solution\\
    \For{$j=1$ to $k$}{
        Let $A_j\subseteq E\setminus S_j$ \\
        \lIf{$A_j = \emptyset$}{return $S_j$}
        Let $\hat{F}^j\gets Sample(\alpha, \set{p_i}_{i=1}^N)$\\ 
        $e_j = \arg \max_{e\in A_j} \hat{F}^j_{S_j}(e)$\\
        $S_{j+1} = S_j + e_j$\\
    }
    return $S_{k+1}$
    \caption{Mini-batch greedy}\label{alg: minibatch greedy}
\end{algorithm}
\end{minipage}


In Section~\ref{sec: analysis} we analyze the relation between the batch parameter, $\alpha$, and the the type of approximate incremental oracles guaranteed by our sampling procedure. We state the main theorem for the section below.
\begin{restatable}{theorem}{minibatchApprox}
\label{thm:minibatch approx guarantees}
    The mini-batch greedy algorithm (Algorithm~\ref{alg: minibatch greedy}) maximizing a non-negative monotone submodular function has the following guarantees:
    \begin{enumerate}
        \item If $F$ has curvature bounded by $c$, and $\alpha = \Theta(\frac{\log n}{\eps^2 (1-c)})$ it holds w.h.p that $\forall j\in [k]$ that $\hat{F}^j_{S_j}$ is an $(1-\epsilon)$-approximate incremental oracle.
        \item If $\alpha = \Theta(\eps^{-2} \gamma\log n)$ it holds w.h.p that $\forall j\in [k]$ that $\hat{F}^j_{S_j}$ is an additive $(\epsilon/\gamma)$-approximate incremental oracle, for any parameter $\gamma>0$.
    \end{enumerate}
    Furthermore, the number of oracle evaluations during preprocessing is $O(nN)$ and an expected $\alpha (\sum_{i=1}^N p_i) (\sum_{j=1}^k \size{A_j})  = O(\alpha k n^2)$ during execution.
\end{restatable}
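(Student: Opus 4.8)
The plan is to fix an iteration $j \in [k]$ and an element $e \in A_j$, analyze the concentration of $\hat{F}^j_{S_j}(e)$ around its mean $F_{S_j}(e)$, and then union-bound over all $j$ and $e$. First I would observe that the estimator is unbiased: since $w_i = 1/\alpha_i$ with probability $\alpha_i$ and $0$ otherwise, $\E[w_i] = 1$, so $\E[\hat{F}^j(S)] = F(S)$ for every fixed $S$, and by linearity $\E[\hat{F}^j_{S_j}(e)] = F_{S_j}(e)$ (here $S_j$ is fixed given the history up to step $j$, so we condition on it). Writing $\hat{F}^j_{S_j}(e) = \sum_{i=1}^N w_i f^i_{S_j}(e)$, this is a sum of independent nonnegative random variables, one per $i$, with the $i$-th term either $0$ or $f^i_{S_j}(e)/\alpha_i$. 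The key structural fact to exploit is that the sampling probability $p_i = \max_{e'} f^i_\myemptyset(e')/F_\myemptyset(e')$ upper bounds the ``relative importance'' of $f^i$; by submodularity $f^i_{S_j}(e) \le f^i_\myemptyset(e) \le p_i F_\myemptyset(e)$, so each term, when nonzero, is at most $f^i_{S_j}(e)/(\alpha p_i) \le F_\myemptyset(e)/\alpha$. This bounded-range property is exactly what feeds a Bernstein/Chernoff bound.

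For part (1), the bounded-curvature case, I would use the curvature bound to convert the additive scale $F_\myemptyset(e)$ into a multiplicative one relative to $F_{S_j}(e)$: curvature $c$ gives $F_{S_j}(e) \ge (1-c) F_\myemptyset(e)$, so each summand is at most $\tfrac{1}{\alpha(1-c)} F_{S_j}(e)$ and similarly the variance is controlled by $\tfrac{1}{\alpha(1-c)} F_{S_j}(e)^2$. A Chernoff/Bernstein bound then gives $\Pr[|\hat{F}^j_{S_j}(e) - F_{S_j}(e)| > \eps F_{S_j}(e)] \le 2\exp(-\Omega(\alpha(1-c)\eps^2))$, which is $\le n^{-c'-2}$ (say) once $\alpha = \Theta(\tfrac{\log n}{\eps^2(1-c)})$; union-bounding over $\le kn \le n^2$ pairs $(j,e)$ gives the w.h.p.\ claim. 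For part (2), without curvature, I cannot relate $F_\myemptyset(e)$ to $F_{S_j}(e)$, so instead I compare to $F(S^*)$: monotonicity gives $F_\myemptyset(e) \le F(e) \le F(S^*)$, so each summand is at most $F(S^*)/\alpha$ and the variance is at most $F(S^*)^2/\alpha$ (crudely, $\sum_i \Var(w_i f^i_{S_j}(e)) \le \sum_i \tfrac{1}{\alpha p_i} f^i_{S_j}(e)\, f^i_\myemptyset(e)/p_i \cdot (\dots)$, which I would bound by $\tfrac{F(S^*)}{\alpha} F_{S_j}(e) \le \tfrac{F(S^*)^2}{\alpha}$). Bernstein's inequality then yields $\Pr[|\hat{F}^j_{S_j}(e) - F_{S_j}(e)| > t] \le 2\exp(-\Omega(\min(t^2 \alpha / F(S^*)^2,\ t\alpha/F(S^*))))$; setting $t = (\eps/\gamma) F(S^*)$ makes both terms in the $\min$ scale like $\alpha \eps^2/\gamma^2$ and $\alpha\eps/\gamma$, so $\alpha = \Theta(\eps^{-2}\gamma^2 \log n)$ — or $\Theta(\eps^{-2}\gamma\log n)$ if one is careful that the additive error only needs to beat $(\eps/\gamma)F(S^*)$ and the linear Bernstein term dominates — suffices, again after a union bound over $(j,e)$.

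Finally, the complexity bookkeeping: preprocessing computes each $p_i$, which requires evaluating $f^i_\myemptyset(e)$ for all $n$ elements and all $N$ functions, hence $O(nN)$ oracle calls. During execution, at step $j$ the call to \texttt{Sample} produces a batch whose expected support size is $\sum_i \alpha_i \le \alpha \sum_i p_i$, and evaluating $\hat{F}^j_{S_j}(e)$ for each $e \in A_j$ touches only the functions in the support; summing over $j$ gives expected $\alpha(\sum_i p_i)(\sum_j |A_j|)$ oracle calls, and since $\sum_i p_i = O(n)$ (each $p_i \le 1$ but more sharply $\sum_i p_i \le n$ because for the maximizing $e'$, $\sum_i f^i_\myemptyset(e')/F_\myemptyset(e') $ telescopes — actually $\sum_i p_i \le \sum_i \max_{e'} f^i_\myemptyset(e')/F_\myemptyset(e')$, which I'd bound by $n$ via $\sum_i f^i_\myemptyset(e')/F_\myemptyset(e') = 1$ for each fixed $e'$ and there are $n$ choices) and $\sum_j |A_j| \le kn$, we get $O(\alpha k n^2)$.

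The main obstacle I anticipate is getting the variance bound in part (2) tight enough to land the stated $\alpha = \Theta(\eps^{-2}\gamma\log n)$ rather than a weaker $\gamma^2$ dependence; this hinges on carefully arguing that in Bernstein's inequality the linear (range) term, not the variance term, is the binding constraint at the scale $t = (\eps/\gamma)F(S^*)$, which in turn uses $F_{S_j}(e) \le F(S^*)$ to absorb one factor of $F(S^*)$ from the variance. The curvature argument in part (1) and the complexity count are comparatively routine.
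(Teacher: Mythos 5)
Part (1) and the complexity bookkeeping are essentially the paper's argument: both use unbiasedness, the bound $w_i f^i_{S_j}(e) \le F_\myemptyset(e)/\alpha$ (via submodularity and the definition of $p_i$), curvature to convert $F_\myemptyset(e)$ into $F_{S_j}(e)/(1-c)$, a Chernoff bound, and a union bound over $O(kn)$ pairs. You also correctly identify that preprocessing is $O(Nn)$ and that $\sum_i p_i \le n$ (each fixed $e'$ contributes total mass $1$).

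Part (2) has a real gap, and you have correctly sensed where it is. With the crude variance bound $\sigma^2 \lesssim F(S^*)^2/\alpha$, the Bernstein exponent at scale $t=(\eps/\gamma)F(S^*)$ is $\min(\eps^2\alpha/\gamma^2,\ \eps\alpha/\gamma)$; for the relevant regime $\eps < \gamma$ the variance term wins and forces $\alpha = \Theta(\gamma^2\eps^{-2}\log n)$, not $\Theta(\gamma\eps^{-2}\log n)$. Your proposed fix --- ``argue the linear term dominates'' --- is not true in general, because the sharper variance bound is $\sigma^2 \le F_\myemptyset(e)F_{S_j}(e)/\alpha$, and when $F_{S_j}(e)$ is of order $F(S^*)$ this is again $\approx F(S^*)^2/\alpha$. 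The missing step is a \emph{case split}: it is not that one Bernstein term always dominates, it is that you should aim for different error guarantees depending on the size of $F_{S_j}(e)$. Concretely, the paper (via Lemma~\ref{lem: concentration lemma}, which is Theorem~\ref{thm:chernoff} applied with a freely chosen overestimate $\mu \ge F_{S_j}(e)$ of the mean and $a = F_\myemptyset(e)/\alpha$) splits on whether $F_\myemptyset(e) \le \gamma F_{S_j}(e)$. In the first case it sets $\mu = F_{S_j}(e)$ and obtains a \emph{multiplicative} $(1\pm\eps)$ guarantee, which is stronger than what is required; in the second case it sets $\mu = F_\myemptyset(e)/\gamma \ge F_{S_j}(e)$ and obtains an \emph{additive} error $\le \eps F_\myemptyset(e)/\gamma \le \eps F(S^*)/\gamma$. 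In both branches the exponent collapses to $-\eps^2\alpha/(3\gamma)$, giving the claimed $\alpha = \Theta(\gamma\eps^{-2}\log n)$. Your Bernstein route can be made to work, but you must reproduce this case split explicitly (e.g., when $F_{S_j}(e)$ is large compare against $\eps F_{S_j}(e)$, not against $(\eps/\gamma)F(S^*)$); simply hoping the linear term dominates does not close the gap. Your variance computation as written (``$\sum_i \tfrac{1}{\alpha p_i} f^i_{S_j}(e)\, f^i_\myemptyset(e)/p_i \cdot (\dots)$'') is also garbled and should be replaced by $\Var(w_i f^i_{S_j}(e)) \le (f^i_{S_j}(e))^2/\alpha_i \le f^i_{S_j}(e)\,F_\myemptyset(e)/\alpha$, summing to $F_{S_j}(e)F_\myemptyset(e)/\alpha$.
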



Combining Theorem~\ref{thm:minibatch approx guarantees} (setting $\gamma = k$ for a cardinality constraint and $\gamma = kp$ for a $p$-system constraint) with Theorem~\ref{thm:meta approx known guarantees} and Theorem~\ref{thm:meta approx guarantees} we state our main result.

\begin{restatable}{theorem}{finalResults}
\label{thm:final res}
    The mini-batch greedy algorithm maximizing a non-negative monotone submodular function requires $O(nN)$ oracle calls during preprocessing and has the following guarantees:
    \begin{enumerate}
        \item If $F$ has curvature bounded by $c$, it achieves w.h.p a $(1-1/e -\eps)$-approximation under a cardinality constraint and $(\frac{1-\eps}{1+p})$-approximation under a $p$-system constraint with an expected $O(\frac{k n^2\log n}{\eps^2 (1-c)})$ oracle evaluations for both cases.
        \item It achieves w.h.p a $(1-1/e -\eps)$-approximation under a cardinality constraint and $(\frac{1-\eps}{1+p})$-approximation under a $p$-system constraint with an expected $O(k^2 (n /\eps)^2\log n)$ and $O(k^2p (n /\eps)^2\log n)$ oracle evaluations respectively.
    \end{enumerate}
\end{restatable}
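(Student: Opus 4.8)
The plan is to prove Theorem~\ref{thm:final res} by composing the sampling guarantee of Theorem~\ref{thm:minibatch approx guarantees} with the meta-algorithm guarantees of Theorems~\ref{thm:meta approx known guarantees} and~\ref{thm:meta approx guarantees}. The key observation is that the mini-batch greedy algorithm (Algorithm~\ref{alg: minibatch greedy}) is precisely an instance of the meta greedy algorithm (Algorithm~\ref{alg: meta minibatch greedy}) whose approximate oracle at step $j$ is $\hat F^j_{S_j}$ with $\hat F^j \gets \mathrm{Sample}(\alpha,\{p_i\}_{i=1}^N)$, and whose sets $A_j$ realize the chosen constraint ($A_j = E\setminus S_j$ for a cardinality constraint, and the constraint-specific choice described in Appendix~\ref{sec: psys} for a $p$-system). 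Hence it suffices, for each of the two parts, to verify that the sampled oracle satisfies the hypothesis of the applicable meta theorem and then read off the query complexity from Theorem~\ref{thm:minibatch approx guarantees}.

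For part 1, take $\alpha = \Theta(\frac{\log n}{\eps^2(1-c)})$. By Theorem~\ref{thm:minibatch approx guarantees}(1), w.h.p.\ every $\hat F^j_{S_j}$ ($j\in[k]$) is simultaneously a $(1-\eps)$-approximate incremental oracle, so Theorem~\ref{thm:meta approx known guarantees} gives a $(1-1/e-\eps)$-approximation under a cardinality constraint and a $(\frac{1-\eps}{1+p})$-approximation under a $p$-system constraint. The query count is $O(nN)$ in preprocessing and an expected $O(\alpha k n^2) = O(\frac{kn^2\log n}{\eps^2(1-c)})$ during execution, the same for both constraints.

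For part 2, apply Theorem~\ref{thm:minibatch approx guarantees}(2) with $\gamma = \Theta(k)$ for the cardinality constraint and $\gamma = \Theta(kp)$ for the $p$-system constraint, where the hidden constant in $\gamma$ is chosen large enough that the resulting additive error $\eps/\gamma$ meets the strict bounds $\eps' < \eps/2k$ (resp.\ $\eps' < \eps/2kp$) demanded by Theorem~\ref{thm:meta approx guarantees}; this only changes $\alpha = \Theta(\eps^{-2}\gamma\log n)$ by a constant factor. Theorem~\ref{thm:minibatch approx guarantees}(2) then guarantees w.h.p.\ that all $\hat F^j_{S_j}$ are additive $(\eps/\gamma)$-approximate incremental oracles, and Theorem~\ref{thm:meta approx guarantees} delivers the claimed $(1-1/e-\eps)$ and $(\frac{1-\eps}{1+p})$ approximation ratios. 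Substituting $\alpha = \Theta(\eps^{-2}k\log n)$ into $O(\alpha k n^2)$ yields $O(k^2(n/\eps)^2\log n)$ expected queries for the cardinality case and $O(k^2 p (n/\eps)^2\log n)$ for the $p$-system case, with $O(nN)$ preprocessing throughout.

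The composition itself is mechanical; the only point needing care is the bookkeeping between the accuracy parameter handed to $\mathrm{Sample}$ and the final approximation ratio — the meta theorems require strict inequalities and specific multiples of $\eps$, whereas Theorem~\ref{thm:minibatch approx guarantees} produces a $(1-\eps)$- or additive-$(\eps/\gamma)$-oracle only up to the constants hidden in $\alpha$. I would absorb every such mismatch into the $\Theta(\cdot)$ in $\alpha$ (rescaling $\eps$ by a fixed constant at the outset if convenient), which leaves all stated asymptotic bounds intact.
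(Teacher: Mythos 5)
Your proposal is correct and takes essentially the same route as the paper: the paper itself derives Theorem~\ref{thm:final res} by combining Theorem~\ref{thm:minibatch approx guarantees} (with $\gamma = k$ for cardinality and $\gamma = kp$ for $p$-systems) with Theorems~\ref{thm:meta approx known guarantees} and~\ref{thm:meta approx guarantees}, and your arithmetic for the query bounds matches. Your remark about absorbing the constant-factor slack between the strict bounds $\eps' < \eps/2k$ (resp.\ $\eps' < \eps/2kp$) and the $\Theta(\cdot)$ in $\alpha$ is exactly the right bookkeeping, which the paper leaves implicit.
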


\subsection{Related work}

\paragraph{Approximate oracles} Apart from the results of \citep{goundan2007revisiting,CalinescuCPV11} there are works that use different notions of an approximate oracle. Several works consider an approximate oracle $\hat{F}$, such that $\forall S\subseteq E, \size{\hat{F}(S) - F(S)} < \eps F(S)$ \citep{CrawfordKT19,HorelS16,QianS0TZ17}. The main difference of these models to our work is the fact that they do not assume the surrogate function, $\hat{F}$, to be submodular. This adds a significant complication to the analysis and degrades the performance guarantees. 

\paragraph{Mini-batch methods} The closest result resembling our mini-batch approach is the \emph{stochastic-greedy algorithm} \citep{BuchbinderFS15, MirzasoleimanBK15}. They improve the expected query complexity of the greedy algorithm under a cardinality constraint by only considering a small random subset of $E\setminus S_j$ at the $j$-th iteration. 
We note that their approach can be combined into our mini-batch algorithm, reducing our query complexity by a $\Tilde{\Theta}(k)$ factor, resulting in an approximation guarantee in expectation instead of w.h.p. 

\paragraph{Smoothed analysis} 
To the best of our knowledge,  \cite{Rubinstein022} is the only result that considers smoothed analysis in the context submodular maximization. They consider submodular maximization under a cardinality constraint, where the cardinality parameter $k$ undergoes a perturbation according to some known distribution. 

\section{Analysis of the mini-batch greedy algorithm}
\label{sec: analysis}

We start by with the following lemma from \citep{kraut}, which bounds the expected size of $\hat{F}$. We present a proof in the appendix for completeness. 
\begin{lemma}
    \label{lem: bound pi}
    The expected size of $\hat{F}$ is
    $\alpha \sum_{i=1}^N{p_i} \leq \alpha n$.
\end{lemma}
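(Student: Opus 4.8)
The plan is a two-part computation: first express the expected size as a sum of non-zero probabilities using linearity of expectation, then bound $\sum_{i=1}^N p_i$ by $n$ via a swap of summation order.

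For the first part, recall that in Algorithm~\ref{alg: approx inc oracale} we set $w_i = 1/\alpha_i$ with probability $\alpha_i = \min\{1,\alpha p_i\}$ and $w_i = 0$ otherwise, independently across $i$. The size of $\hat{F} = \sum_{i=1}^N w_i f^i$ is the number of indices $i$ with $w_i \neq 0$, i.e., $\sum_{i=1}^N \mathbf{1}[w_i \neq 0]$. By linearity of expectation, $\E[\text{size of }\hat F] = \sum_{i=1}^N \Pr[w_i \neq 0] = \sum_{i=1}^N \alpha_i = \sum_{i=1}^N \min\{1,\alpha p_i\} \leq \alpha \sum_{i=1}^N p_i$, which gives the first claimed bound.

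For the second part, I would use the definition $p_i = \max_{e\in E,\, F_\myemptyset(e)\neq 0} \frac{f^i_\myemptyset(e)}{F_\myemptyset(e)}$ and bound the maximum by the sum over the relevant $e$:
\[
\sum_{i=1}^N p_i \;\leq\; \sum_{i=1}^N \ \sum_{e:\, F_\myemptyset(e)\neq 0} \frac{f^i_\myemptyset(e)}{F_\myemptyset(e)} \;=\; \sum_{e:\, F_\myemptyset(e)\neq 0} \frac{\sum_{i=1}^N f^i_\myemptyset(e)}{F_\myemptyset(e)} \;=\; \sum_{e:\, F_\myemptyset(e)\neq 0} \frac{F_\myemptyset(e)}{F_\myemptyset(e)} \;\leq\; |E| = n,
\]
where the middle equality swaps the order of summation and the next one uses $F = \sum_{i=1}^N f^i$, hence $\sum_{i=1}^N f^i_\myemptyset(e) = \sum_{i=1}^N f^i(e) = F(e) = F_\myemptyset(e)$ (using $f^i(\emptyset)=0$). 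Combining the two parts yields $\E[\text{size of }\hat F] \leq \alpha \sum_{i=1}^N p_i \leq \alpha n$. (For uniform sampling $p_i = 1/N$ the bound $\sum_i p_i = 1 \le n$ is immediate.)

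There is no real obstacle here; the only point requiring a little care is the bound-the-max-by-the-sum step and restricting the sum to elements $e$ with $F_\myemptyset(e)\neq 0$, so that the division is well defined and the telescoping to $|E|$ goes through.
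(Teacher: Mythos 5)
Your proof is correct and lands on the same bound, but the second half takes a slightly different route. The first half (linearity of expectation, $\E\bigl[\#\{i : w_i\neq 0\}\bigr]=\sum_{i}\alpha_i\le\alpha\sum_i p_i$) is identical to the paper's argument. For the bound $\sum_i p_i\le n$, the paper partitions $[N]$ into sets $A_e$ according to which $e$ attains the maximum defining $p_i$ (breaking ties arbitrarily), rewrites $\sum_i p_i$ as $\sum_{e}\sum_{i\in A_e} f^i_\myemptyset(e)/F_\myemptyset(e)$, and then bounds each $e$-term by $1$ since $\sum_{i\in A_e} f^i_\myemptyset(e)\le F_\myemptyset(e)$. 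You instead relax the max to a full sum over $e$, swap the order of summation, and then each $e$-term equals exactly $1$ by decomposability. Both are standard double-counting arguments; yours avoids defining the argmax partition and the associated tie-breaking, which is a small presentational simplification, while the paper's version is the exact identity with a one-sided bound applied at the end rather than at the start. Neither gains anything over the other here, since both stop at $\sum_i p_i\le n$.
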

Note that the above yields a tighter bound of $\alpha$ for uniform sampling ($p_i=1/N$).
Next, let us show that $\hat{F}$ returned by Algorithm~\ref{alg: approx inc oracale} is indeed an (additive) approximate incremental oracle w.h.p. We make use of the following Chernoff bound. 
\begin{theorem}[Chernoff bound \citep{MotwaniR95}]\label{thm:chernoff}

Let $X_1,...,X_N$ be independent random variables in the range $[0, a]$. Let $X = \sum_{i=1}^{N}X_i$. Then for any $\eps \in [0, 1]$ and $\mu \geq \mathbb{E}[T]$ it holds that $\mathbb{P}(|X - \mathbb{E}[X]| \geq \eps \mu) \leq 2 \exp \left(-\eps^2 \mu/3a \right)$.
\end{theorem}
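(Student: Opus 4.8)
The plan is the standard exponential-moment (Chernoff) argument, treating the two tails separately and combining them with a union bound. First I would reduce to the case $a=1$ by rescaling: set $Y_i = X_i/a \in [0,1]$, $Y = X/a = \sum_{i=1}^N Y_i$, $m = \mathbb{E}[Y] = \mathbb{E}[X]/a$, and $\mu' = \mu/a$, so the hypothesis becomes $\mu' \ge m$ and the claim becomes $\mathbb{P}(|Y - m| \ge \eps \mu') \le 2\exp(-\eps^2 \mu'/3)$. It then suffices to bound $\mathbb{P}(Y \ge m + \eps\mu')$ and $\mathbb{P}(Y \le m - \eps\mu')$ each by $\exp(-\eps^2\mu'/3)$.

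For the upper tail, fix $t>0$. Convexity of $x \mapsto e^{tx}$ on $[0,1]$ gives $e^{tY_i} \le 1 + Y_i(e^t - 1)$, hence $\mathbb{E}[e^{tY_i}] \le 1 + \mathbb{E}[Y_i](e^t-1) \le \exp(\mathbb{E}[Y_i](e^t-1))$, and independence yields $\mathbb{E}[e^{tY}] \le \exp(m(e^t-1))$. Markov's inequality applied to $e^{tY}$ then gives
$$\mathbb{P}(Y \ge m + \eps\mu') \le \exp\!\big(m(e^t - 1 - t) - t\eps\mu'\big).$$
Since $e^t - 1 - t \ge 0$ and $m \le \mu'$, the exponent is at most $\mu'\big(e^t - 1 - t(1+\eps)\big)$; choosing $t = \ln(1+\eps)$ makes this $\mu'\big(\eps - (1+\eps)\ln(1+\eps)\big)$. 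Finishing requires the elementary scalar inequality $\eps - (1+\eps)\ln(1+\eps) \le -\eps^2/3$ for $\eps \in [0,1]$, which I would verify by showing $g(\eps) := \eps - (1+\eps)\ln(1+\eps) + \eps^2/3$ satisfies $g(0)=0$ and $g'(\eps) = 2\eps/3 - \ln(1+\eps) \le 0$ on $[0,1]$ (the latter by evaluating at the endpoints after noting $g'$ is unimodal there). This yields $\mathbb{P}(Y \ge m + \eps\mu') \le \exp(-\eps^2\mu'/3)$.

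The lower tail is symmetric: for $t>0$ apply Markov to $e^{-tY}$, using $\mathbb{E}[e^{-tY_i}] \le \exp(\mathbb{E}[Y_i](e^{-t}-1))$, to get $\mathbb{P}(Y \le m - \eps\mu') \le \exp\!\big(m(e^{-t}-1+t) - t\eps\mu'\big)$; since $e^{-t}-1+t \ge 0$ and $m \le \mu'$ this is at most $\exp\!\big(\mu'(e^{-t} - 1 + t(1-\eps))\big)$, and the choice $t = \ln\frac{1}{1-\eps}$ (valid for $\eps<1$; for $\eps=1$ the claim is vacuous or follows by $t\to\infty$) gives $\exp\!\big(\mu'(-\eps - (1-\eps)\ln(1-\eps))\big)$. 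Here $-\eps - (1-\eps)\ln(1-\eps) \le -\eps^2/2 \le -\eps^2/3$ on $[0,1]$, provable by the same template applied to $f(\eps) = -\eps-(1-\eps)\ln(1-\eps)+\eps^2/2$ (with $f(0)=0$, $f'(0)=0$, $f''\le 0$). Adding the two tail bounds and substituting back $\mu' = \mu/a$ gives $\mathbb{P}(|X - \mathbb{E}[X]| \ge \eps\mu) \le 2\exp(-\eps^2\mu/(3a))$.

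The only non-mechanical parts are the two one-variable inequalities comparing $\eps - (1+\eps)\ln(1+\eps)$ and $-\eps-(1-\eps)\ln(1-\eps)$ to $-\eps^2/3$; everything else is bookkeeping, and the use of $\mu \ge \mathbb{E}[X]$ only enters through the two places where $m$ is replaced by $\mu'$. Since this is the textbook multiplicative Chernoff bound (cited here to Motwani--Raghavan), it would also be legitimate to state it without proof, but the above is the short self-contained derivation.
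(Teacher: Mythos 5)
The paper does not actually prove this theorem; it is stated as a citation to Motwani--Raghavan (the only thing in the source near it is a commented-out fragment showing how to pass from the $\mu = \mathbb{E}[X]$ version to the $\mu \geq \mathbb{E}[X]$ version). Your derivation is the standard exponential-moment argument and it is correct: the rescaling to $[0,1]$, the MGF bound $\mathbb{E}[e^{tY_i}] \leq \exp(\mathbb{E}[Y_i](e^t-1))$ via chord-convexity and $1+u \leq e^u$, the optimizing choices $t = \ln(1+\eps)$ and $t = \ln\frac{1}{1-\eps}$, and the two scalar inequalities $\eps - (1+\eps)\ln(1+\eps) \leq -\eps^2/3$ and $-\eps - (1-\eps)\ln(1-\eps) \leq -\eps^2/2$ are all right, and the $\mu \geq \mathbb{E}[X]$ hypothesis is used exactly where it should be (to replace $m$ by $\mu'$ after noting the relevant coefficient is nonnegative). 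The only slightly informal spot is the treatment of the lower tail at $\eps=1$, but your remark that the probability is controlled by taking $t \to \infty$ (or simply that $\mathbb{P}(Y \leq m - \mu') = 0$ whenever $\mu' > m$ since $Y \geq 0$) is adequate. As you say yourself, citing this bound without proof would also be acceptable; your write-up is a correct self-contained alternative.
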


The following lemma provides concentration guarantees for $\hat{F}$ in Algorithm~\ref{alg: approx inc oracale}. 
\begin{lemma}
\label{lem: concentration lemma}
For every $S\subseteq E$ ($\hat{F}$ sampled after $S$ is fixed) and for every $e\in E$ and $\mu \geq F_S(e)$, it holds that $\mathbb{P}\left[|\hat{F}_S(e) - F_S(e)| \geq \eps \mu \right]
\leq 2 \exp \left(-\frac{\eps^2 \mu}{3 F_\myemptyset(e) / \alpha} \right)$.
\end{lemma}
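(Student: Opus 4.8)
The plan is to express $\hat{F}_S(e)$ as a sum of independent random variables, one per index $i \in [N]$, and then apply the Chernoff bound of Theorem~\ref{thm:chernoff} directly. Recall that $\hat{F} = \sum_{i=1}^N w_i f^i$ where $w_i = 1/\alpha_i$ with probability $\alpha_i = \min\{1, \alpha p_i\}$ and $0$ otherwise, and these choices are made independently across $i$. Define $X_i = w_i f^i_S(e) = w_i(f^i(S+e) - f^i(S))$. Then $\hat{F}_S(e) = \sum_{i=1}^N X_i$, the $X_i$ are independent, and $\mathbb{E}[X_i] = \alpha_i \cdot (1/\alpha_i) \cdot f^i_S(e) = f^i_S(e)$, so $\mathbb{E}[\hat{F}_S(e)] = \sum_i f^i_S(e) = F_S(e)$. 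Note that $S$ must be fixed before $\hat F$ is sampled so that the $f^i_S(e)$ are constants, not random; this is why the lemma statement insists on this ordering.

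Next I would bound the range of each $X_i$. Since each $f^i$ is monotone and submodular with $f^i(\emptyset) = 0$, we have $0 \le f^i_S(e) \le f^i_\myemptyset(e)$ by submodularity (diminishing returns: the marginal gain of $e$ over the larger set $S$ is at most the marginal gain over $\emptyset$). Hence $X_i \in [0, f^i_\myemptyset(e)/\alpha_i]$. To get a single value $a$ for the Chernoff bound I need a uniform upper bound on $f^i_\myemptyset(e)/\alpha_i$. The key computation: $\alpha_i = \min\{1, \alpha p_i\}$, and by definition $p_i = \max_{e' : F_\myemptyset(e') \neq 0} f^i_\myemptyset(e')/F_\myemptyset(e') \ge f^i_\myemptyset(e)/F_\myemptyset(e)$. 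So if $\alpha_i = \alpha p_i$ then $f^i_\myemptyset(e)/\alpha_i = f^i_\myemptyset(e)/(\alpha p_i) \le f^i_\myemptyset(e)/(\alpha \cdot f^i_\myemptyset(e)/F_\myemptyset(e)) = F_\myemptyset(e)/\alpha$; and if $\alpha_i = 1$ (i.e. $\alpha p_i \ge 1$) then $f^i_\myemptyset(e)/\alpha_i = f^i_\myemptyset(e) \le F_\myemptyset(e) \le F_\myemptyset(e)/\alpha$ since $\alpha \ge 1$ is assumed (or, more carefully, $f^i_\myemptyset(e) \le F_\myemptyset(e) = F_\myemptyset(e) \cdot \alpha_i / \alpha_i$ and $\alpha p_i \ge 1$ gives $1/\alpha_i = 1 \le \alpha p_i$, so $f^i_\myemptyset(e)/\alpha_i \le \alpha p_i f^i_\myemptyset(e) \le \alpha F_\myemptyset(e) \cdot p_i \cdot (f^i_\myemptyset(e)/F_\myemptyset(e))/p_i$... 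I would present the clean version). Either way $a := F_\myemptyset(e)/\alpha$ works as the range bound for every $X_i$.

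Finally, I apply Theorem~\ref{thm:chernoff} with $X = \hat{F}_S(e)$, range parameter $a = F_\myemptyset(e)/\alpha$, and any $\mu \ge \mathbb{E}[X] = F_S(e)$, which yields
\[
\mathbb{P}\left[|\hat{F}_S(e) - F_S(e)| \ge \eps\mu\right] \le 2\exp\left(-\frac{\eps^2\mu}{3F_\myemptyset(e)/\alpha}\right),
\]
exactly as claimed. I do not anticipate a serious obstacle here; the only point requiring a little care is the uniform range bound $f^i_\myemptyset(e)/\alpha_i \le F_\myemptyset(e)/\alpha$, handling the two cases of the $\min$ in $\alpha_i$ and invoking submodularity for $f^i_S(e) \le f^i_\myemptyset(e)$. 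One should also note the edge case $F_\myemptyset(e) = 0$, where $f^i_S(e) = 0$ for all $i$ and $\hat F_S(e) = F_S(e) = 0$ deterministically, so the bound holds trivially (and the exponent is interpreted as $-\infty$).
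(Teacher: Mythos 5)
Your overall plan (decompose $\hat F_S(e)$ as a sum of per-$i$ terms, compute the mean, bound the range, apply Theorem~\ref{thm:chernoff}) is the right one and matches the paper's structure, but there is a genuine gap exactly at the step you flag as needing ``a little care'': the uniform range bound $a=F_\myemptyset(e)/\alpha$ is \emph{false} for indices $i$ with $\alpha_i=1$. For such $i$ you have $X_i\in[0,f^i_\myemptyset(e)]$, and your claim ``$f^i_\myemptyset(e)\le F_\myemptyset(e)\le F_\myemptyset(e)/\alpha$ since $\alpha\ge 1$'' has the inequality backwards: $\alpha\ge1$ gives $F_\myemptyset(e)/\alpha\le F_\myemptyset(e)$, not the reverse. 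In fact $\alpha_i=1$ means $\alpha p_i\ge 1$, i.e.\ $p_i=\max_{e'}f^i_\myemptyset(e')/F_\myemptyset(e')$ is large, so there is no reason to expect $f^i_\myemptyset(e)\le F_\myemptyset(e)/\alpha$ for such $i$ (indeed if $e$ attains the max, you get $f^i_\myemptyset(e)\ge F_\myemptyset(e)/\alpha$, the opposite). Your ``more carefully'' aside then trails off without resolving this. As written, the application of Theorem~\ref{thm:chernoff} with $a=F_\myemptyset(e)/\alpha$ is invalid whenever any $\alpha_i=1$.

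The paper avoids this by observing that for $i\in I:=\{i:\alpha_i=1\}$ the term $w_if^i_S(e)=f^i_S(e)$ is \emph{deterministic} (those $f^i$ are always included with weight $1$), so it contributes no randomness. It therefore subtracts $G_S(e)=\sum_{i\in I}f^i_S(e)$ from both $\hat F_S(e)$ and $F_S(e)$, applies Chernoff only over the genuinely random indices $J=[N]\setminus I$, for which $\alpha_i=\alpha p_i<1$ so the range bound $f^i_S(e)/\alpha_i\le F_\myemptyset(e)/\alpha$ does go through, and uses monotonicity/nonnegativity of $G_S(e)$ to argue $\mu\ge F_S(e)\ge F'_S(e)$ so the same $\mu$ works as the upper bound on the new mean. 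You should add this decomposition; everything else in your argument (mean computation, the $p_i$ manipulation for $i\in J$, the $F_\myemptyset(e)=0$ edge case) is correct and matches the paper.
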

\begin{proof}
Fix some $e\in E$.
Let $G = \sum_{i\in I} f^i$, where $I = \{i\in [N] \mid \alpha_i = 1\}$. Let $F'_S(e) = F_S(e) - G_S(e)$ and $\hat{F}'_S(e)=\hat{F}_S(e) - G_S(e)$. Let $J = [N] \setminus I$. 
It holds that:
\begin{align*}
\mathbb{P}\left[|\hat{F}_S(e) - F_S(e)| \geq \eps \mu \right] 
= \mathbb{P}\left[|\hat{F}'_S(e) + G_S(e) - F'_S(e) - G_S(e)| 
 \geq \eps \mu \right] 
  = \mathbb{P}\left[|\hat{F}'_S(e) - F'_S(e)| \geq \eps \mu \right]
\end{align*}

Due to the fact that $\Expc{ w_i} = 1$ we have $\mathbb{E}[\hat{F}'_S(e)] = \mathbb{E}[\sum_{i\in J} w_i f^i_S(e)] 
    = F'_S(e)$. As $f^i$'s are monotone, it holds that $\mu \geq F_S(e) \geq F'_S(e)$. Applying a Chernoff bound (Theorem \ref{thm:chernoff}) we have
\begin{align*}
\mathbb{P}\left[|\hat{F}'_S(e) - F'_S(e)| \geq \eps \mu \right] \leq 2 \exp \left(-\eps^2 \mu / 3a \right) 
\end{align*}
where $a = \max \{w_if^i_S(e)\}_{i\in J}$. Recall that $w_i = 1 / \alpha_i$ where $\alpha_i = \min\{1, \alpha  p_i$\} and $\alpha_i < 1$ for all $i \in J$. Let us upper bound $a$.
\begin{align*}
a = \max_{i\in J}w_if^i_S(e) = \max_{i\in J}\frac{f^i_S(e)}{\alpha p_i}  = \max_{i\in J}\frac{f^i_S(e)}{\alpha \cdot \underset{e'\in E}{\max} \frac{f^i_\myemptyset( e')}{F_\myemptyset(e')}}  
\leq \max_{ i\in J}\frac{f^i_\myemptyset(e)}{\alpha \cdot \frac{f^i_\myemptyset(e)}{F_\myemptyset(e)}} = \frac{F_\myemptyset(e)}{\alpha} 
\end{align*}
Where the inequality is due to submodularity and non-negativity in the numerator and maximality in the denominator. Note that the above is also correct if we only have some approximation to $p_i$ -- i.e., given $p'_i > p_i \lambda$ for $\lambda \in (0,1)$, we can increase $\alpha$ by a $1/\lambda$ factor and the above still holds.
Given the above upper bound for $a$ we get:
\begin{align*}
 \mathbb{P}\left[|\hat{F}_S(e) - F_S(e)| \geq \eps \mu \right]
\leq 2  \left(-\frac{\eps^2 \mu}{ 3a} \right) 
 \leq 2 \exp \left(-\frac{\eps^2 \alpha \mu}{3 F_\myemptyset(e) } \right) &\qedhere
\end{align*}
\end{proof}

Using the above we are ready to prove 
Theorem~\ref{thm:minibatch approx guarantees}.

\begin{proof}[\textbf{Proof of Theorem~\ref{thm:minibatch approx guarantees}}]
The number of oracle evaluations is due to Lemma~\ref{lem: bound pi} and the fact that the algorithm executes for $k$ iteration and must evaluate $\size{A_j}\leq n$ elements per iteration.

Let us prove the approximation guarantees. Let us start with the bounded curvature case.
    Fix some $S_j$. As $\hat{F^j}$ is sampled after $S_j$ is fixed, we can fix some $e\in E$ and apply Lemma~\ref{lem: concentration lemma} with $\mu = F_{S_j}(e)$. We get that:
    \begin{align*}
     \mathbb{P}\left[|\hat{F}^j_{S_j}(e) - F_{S_j}(e)| \geq \eps F_{S_j}(e) \right] \leq 2 \exp \left(-\frac{\eps^2 \alpha F_{S_j}(e)}{3 F_\myemptyset(e) } \right) 
    \leq 2\exp \left(-\frac{\eps^2 \alpha (1-c)}{3} \right) \leq 1/n^3    
    \end{align*}
    
Where the second inequality is due to the fact that $F_{S_j}(e)/F_\myemptyset(e) \geq \min_{S \subseteq E, e'\in E \setminus S} F_{S}(e')/F_\myemptyset(e') = 1-c$, and the last transition is by setting an appropriate constant in $\alpha = \Theta(\frac{ \log(n)}{\eps^{2}(1-c)})$.
When the curvature is not bounded, we break the analysis into cases.

\paragraph{$F_\myemptyset(e) \leq \gamma F_{S_j}(e)$:} Setting $\mu = F_{S_j}(e)$ we get:
\begin{align*}
    \mathbb{P}\left[|\hat{F}^j_{S_j}(e) - F_{S_j}(e)| \geq \eps F_{S_j}(e) \right] \leq 2 \exp \left(-\frac{\eps^2 \alpha F_{S_j}(e)}{3 F_\myemptyset(e) } \right) 
    \leq 2\exp \left(-\frac{\eps^2 \alpha}{3\gamma} \right) \leq 1/n^3
\end{align*}
\paragraph{$F_\myemptyset(e) > \gamma F_{S_j}(e)$:}
Here we can set $\mu = F_\myemptyset(e)/\gamma \geq F_{S_j}(e)$ and get:
    \begin{align*}
     \mathbb{P}\left[|\hat{F}^j_{S_j}(e) - F_{S_j}(e)| \geq \eps F_\myemptyset(e) /\gamma \right] \leq 2 \exp \left(-\frac{\eps^2 \alpha F_\myemptyset(e)}{3\gamma F_\myemptyset(e) } \right) 
     \leq 2\exp \left(-\frac{\eps^2 \alpha }{3\gamma} \right) \leq 1/n^3  
    \end{align*}

Where in both cases the last inequality is by setting an appropriate constant in $\alpha = \Theta(\epsilon^{-2}\gamma \log n)$. Note that in the first case we get an $(1-\epsilon)$-approximate incremental oracle and in the second an additive $(\eps/\gamma)$-approximate incremental oracle. The second case is the worse of the two for our analysis.
For both bounded and unbounded curvature, we take a union bound over all $e\in E$ and $j\in[k]$ (at most $n^2$ values), which concludes the proof.
\end{proof}
Note that in the above we use the fact that $F_\myemptyset(e) \leq F(S^*)$ (recall that $S^*$ is the optimal solution for $F$) to get the second result. This is sufficient for our proofs to go through, however, the theorem has a much stronger guarantee which might be useful in other contexts (as we will see in Section~\ref{sec: smoothed}).


\section{Experiments}
\label{sec: experiments}
We perform experiments on the following datasets, where the goal for all datasets is to maximize $F(S)$ subject to $\size{S} \leq k$.

\paragraph{Uber pickups} 
This dataset consists of Uber pickups in New York city in May 2014\footnote{ \url{https://www.kaggle.com/fivethirtyeight/uber-pickups-in-new-york-city}}. The set contains $652,434$ records, where each record consists of a longitude and latitude, representing a pickup location. Following \citep{rafiey2022sparsification} we aim to find $k$ positions for idle drivers to wait from a subset of popular pickup locations. We formalize the problem as follows. 
We run Lloyd's algorithm on the dataset, $X$, and find 100 cluster centers. We set these cluster centers to be the ground set $E$. We define the goal function $F:2^E \rightarrow \mathbb{R}^+$ as $F(S) = \sum_{v\in X} f_v(S)$ where $f_v(S)=\max_{e\in E} d(v,e) - \min_{e\in S} d(v,e)$, and $d(v,e)$ is the Manhattan distance between $v$ and $e$. 

\paragraph{Discogs \citep{Kunegis13}} This dataset\footnote{\url{http://konect.cc/networks/discogs_lstyle/}} provides audio record information structured as a bipartite graph \( G = (L, R; E') \). The left nodes represent labels and the right nodes represent styles.
Each edge \( (u, v) \in L \times R \) signifies the involvement of a label \( u \) in producing a release of a style \( v \). The dataset comprises \( |R| = 383 \) labels, \( |L| = 243,764 \) styles, and \( |E'| = 5,255,950 \) edges. We aim to select \( k \) styles that cover the activity of the maximum number of labels. We construct a maximum coverage function \( F : 2^R \rightarrow \mathbb{R} \), where \( F(S) = \sum_{v \in L} f_v(S) \), and \( f_v(S) \) equals 1 if \( v \) is adjacent to some element of \( S \) and 0 otherwise. 

\paragraph{Examplar-based clustering} We consider the problem of selecting representative subset of $k$ images from a massive data set. We present experiments for both the CIFAR100 and the FashionMNIST datasets. For each dataset we consider a subset of $50,000$ images, denoted by $X$. We flatten every image into a one dimensional vector, subtract from it the mean of all images and normalize it to unit norm. We take the distance between two elements in $X$ as $d(x,x') = \norm{x-x'}^2$. Here the ground set is simply the dataset, $E=X$.
Similarly to the Uber pickup dataset we define the goal function $F:2^E \rightarrow \mathbb{R}^+$ as $F(S) = \sum_{v\in X} f_v(S)$ where $f_v(S)=\max_{e\in E} d(v,e) - \min_{e\in S} d(v,e)$.

\paragraph{Experimental setup} We compare the sparsifier approach with the mini-batch approach for each of the above datasets as follows. We first compute the $p_i$'s and fix a parameter $\beta \in (0,1)$. We take $\alpha$ in the algorithm such that $\alpha \sum_{i}^N p_i = \beta N$. That is, $\beta$ is the desired fraction of \emph{data} (in our experiments the elements that make up $F$ correspondent to elements in the dataset) we wish to sample for the sparsifier / mini-batch. Note that the $p_i$'s are the same for the sparsifier and the mini-batch approach, and the only difference is whether we sample one time in the beginning (sparsifier) or for every iteration of the greedy algorithm (mini-batch). 
For example, setting $\beta = 0.01$ means that the sparsifier will sample 1\% of the data and execute the greedy algorithm, while the mini-batch algorithm will sample 1\% of the data for every iteration of the greedy algorithm. 

In practice the naive algorithm is usually augmented with the following heuristic called \emph{lazy-greedy} \cite{lazygreedy}. The algorithm initially creates a max heap of $E$ ordered according to a key $\rho(e)$, where initially $\rho(e)=F_\myemptyset(e)$. During the $j$-th iteration it repeatedly pops $e$ from the top of the heap, updates its key $\rho(e) = F_{S_j}(e)$ and inserts it back to the heap. Due to submodularity, if $e$ remains at the top of the heap after the update we know it maximizes $F_{S_j}(e)$. While this does not change the asymptotic number of oracle queries, it often leads to significant improvement in practice.  

We also consider \emph{stochastic-greedy} \citep{BuchbinderFS15, MirzasoleimanBK15}, which further reduces the number of oracle evaluations by greedily choosing the next element added to $S_j$ from a subset of $E\setminus S_j$ of size $\frac{n}{k}\log \frac{1}{\eps}$ sampled uniformly at random. 

We apply both of the above to the sparsifier / mini-batch approach and compare them against lazy-greedy and stochastic-greedy, respectively. Finally, for every mini-batch / sparsifier variant we execute we also run a baseline where $\forall i, p_i=1/N$.

In Figure~\ref{fig: comparative} we plot results for different $\beta$ values for both the sparsifier and the mini-batch approach for different values of $k$. Every point on the graph is the average of 20 executions with the same $k,\beta$. For every dataset we present three plots: (a) the relative utility (value of $F$) of the mini-batch / sparsifier approach compared to lazy-greedy, (b) the relative number of oracle evaluations excluding the preprocessing step, and (c) relative number of oracle evaluations including the preprocessing step. To allow different $\beta$ values to fit on the same plot we use a logarithmic scale in (b). On the other hand, the preprocessing step dominates in (c), therefore, we use a linear scale. We use ``u" / ``w" prefixes for uniform / weighted sampling.  

In Figure~\ref{fig: comparative_eps} (Appendix~\ref{sec: additional experiments}) we compare the sparsifier and mini-batch algorithm (augmented with the sampling scheme of stochastic-greedy) to the stochastic-greedy algorithm. As the ground sets for both Uber pickup and Discogs are rather small, we only run experiments on the image datasets.

\paragraph{Results} We observe that the mini-batch algorithm is superior to the sparsifier approach for small values of $\beta$ while using about the same number of queries. What is perhaps most surprising is that uniform sampling outperforms over weighted sampling. We provide a theoretical explanation for this in the next section.


\begin{figure}[htbp]
\includegraphics[width=\linewidth]{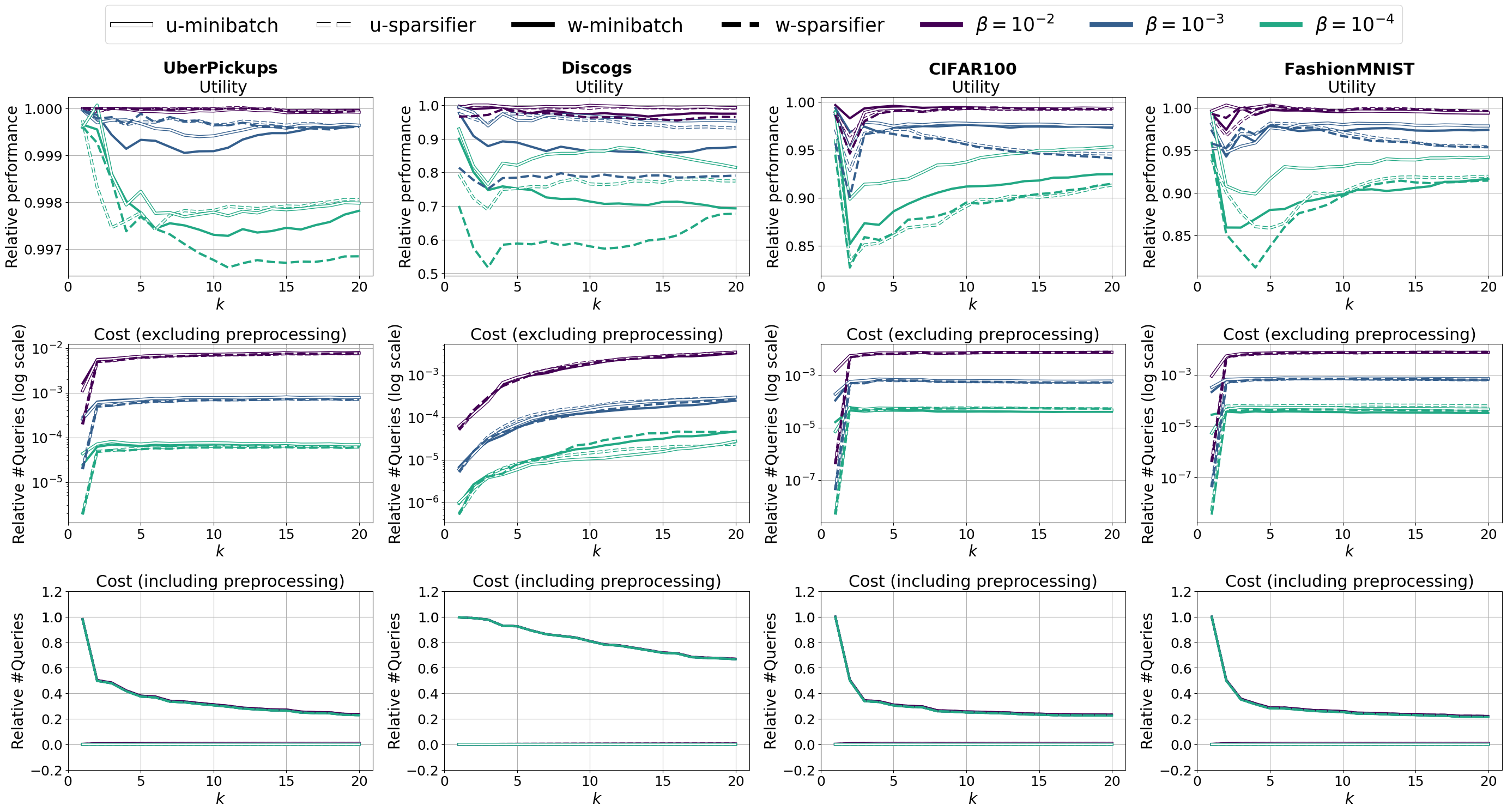}
\caption{Sparsifier and mini-batch compared with lazy-greedy.}
\label{fig: comparative}
\end{figure}

\section{Smoothed analysis}
\label{sec: smoothed}
In this section we show that uniform sampling achieves better results than weighted sampling under our models of smoothing. We start by noting that under uniform sampling it holds that  $\sum_{i\in[N]}p_i = 1$ which in turn results in a bound of $\alpha$ in Lemma~\ref{lem: bound pi} instead of $n\alpha$ for the weighted case. This explains one part of the $\Theta(1/n\phi)$ factor in the transition from the ``Weighted" to ``Uniform" column in Table~\ref{table:results}. For the rest of this section we explain the $1/\phi$ factor.

We start by stating the following useful Chernoff bound:

\begin{theorem}[Bounded dependency Chernoff bound \citep{Pemmaraju01}]\label{thm:bounde chernoff}

Let $X_1,...,X_N$ be identically distributed random variables in the range $[0, 1]$ with bounded dependancy $d$. Let $X = \sum_{i=1}^{N}X_i$. Then for any $\eps \in [0, 1]$ and $\mu = \mathbb{E}[T]$ it holds that $\mathbb{P}(|X - \mathbb{E}[X]| \geq \eps \mu) \leq \Theta(d)\cdot  \exp \left(-\Theta(\eps^2 \mu/d) \right)$.
\end{theorem}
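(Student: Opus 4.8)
The plan is to reduce the bounded-dependency setting to the classical independent case by properly colouring the dependency graph of the $X_i$, and then to combine the per-colour moment generating functions through Hölder's inequality so that the colour count enters the exponent only linearly.

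First I would form the dependency graph $G$ on the vertex set $[N]$, joining $i$ and $j$ whenever $X_i$ and $X_j$ are not independent. By hypothesis every vertex of $G$ has degree at most $d$, so a greedy colouring produces a proper colouring with $\chi\le d+1$ colours; this partitions $[N]$ into independent sets $I_1,\dots,I_\chi$ for which $\{X_i\}_{i\in I_\ell}$ are mutually independent within each class. Writing $Y_\ell=\sum_{i\in I_\ell}X_i$ and $\mu_\ell=\mathbb{E}[Y_\ell]$, I get $X=\sum_{\ell=1}^\chi Y_\ell$ and $\mu=\sum_\ell\mu_\ell$, with each $Y_\ell$ a sum of independent $[0,1]$-valued variables, i.e.\ squarely in the regime of the textbook multiplicative Chernoff bound (the ``identically distributed'' hypothesis plays no role; only the range $[0,1]$ and the degree bound are used).

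For the tail itself I would control the MGF of $X$. Since $X=\sum_\ell Y_\ell$, Hölder's inequality gives $\mathbb{E}[e^{\lambda X}]=\mathbb{E}\bigl[\prod_\ell e^{\lambda Y_\ell}\bigr]\le\prod_\ell\bigl(\mathbb{E}[e^{\chi\lambda Y_\ell}]\bigr)^{1/\chi}$, and since each $Y_\ell$ is a sum of independent $[0,1]$ variables, the elementary inequality $e^{sx}\le 1+x(e^{s}-1)$ on $[0,1]$ yields $\mathbb{E}[e^{\chi\lambda Y_\ell}]\le\exp\bigl(\mu_\ell(e^{\chi\lambda}-1)\bigr)$; multiplying over $\ell$ gives $\mathbb{E}[e^{\lambda X}]\le\exp\bigl(\tfrac{\mu}{\chi}(e^{\chi\lambda}-1)\bigr)$. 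Applying Markov's inequality to $e^{\lambda X}$ at the target value $(1+\eps)\mu$ and substituting $u=\chi\lambda$ turns the exponent into $\tfrac{\mu}{\chi}\bigl(e^{u}-1-u(1+\eps)\bigr)$, the standard expression minimised at $u=\ln(1+\eps)$, giving $\mathbb{P}[X\ge(1+\eps)\mu]\le\exp\bigl(-\tfrac{\mu}{\chi}((1+\eps)\ln(1+\eps)-\eps)\bigr)\le\exp(-\eps^2\mu/3\chi)$ for $\eps\in[0,1]$; the lower tail is symmetric. Since $\chi\le d+1=\Theta(d)$, both tails are $\exp(-\Theta(\eps^2\mu/d))$, and a union bound over the two tails gives the claim — the $\Theta(d)$ prefactor in the statement is then pure slack.

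The one place I expect to have to be careful is the dependence on $d$ in the exponent. The naive alternative — if $|X-\mathbb{E}[X]|\ge\eps\mu$ then some colour class deviates by at least $\eps\mu/\chi$, so apply the ordinary Chernoff bound to each $Y_\ell$ and union-bound over the $\chi$ classes — runs a Chernoff estimate at relative deviation $\eps\mu/(\chi\mu_\ell)$, whose exponent scales like $(\eps\mu/\chi)^2/\mu_\ell$; since $\mu_\ell$ can be as large as $\Theta(\mu)$, this only delivers $\exp(-\Theta(\eps^2\mu/d^2))$. It is precisely the Hölder/MGF step that upgrades $d^2$ back to $d$, because there $\chi$ multiplies the exponent rather than its square. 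Everything else is routine.
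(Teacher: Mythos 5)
The paper does not prove this theorem; it is quoted from \citet{Pemmaraju01}, whose argument (as the title ``Equitable colorings extend Chernoff--Hoeffding bounds'' suggests) proceeds by a Hajnal--Szemer\'edi \emph{equitable} colouring of the dependency graph into $\chi\le d+1$ classes of (nearly) equal size, followed by a union bound over the classes. Your proof is correct but takes a genuinely different route: you use an arbitrary (greedy) proper colouring and push through the moment-generating function via the generalized H\"older inequality, $\mathbb{E}\bigl[\prod_\ell e^{\lambda Y_\ell}\bigr]\le\prod_\ell\bigl(\mathbb{E}[e^{\chi\lambda Y_\ell}]\bigr)^{1/\chi}$, which is Janson's technique for partly dependent sums. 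Each step you give checks out, including the diagnosis that a greedy colouring plus a plain union bound only yields $\exp(-\Theta(\eps^2\mu/d^2))$ because a colour class may carry $\Theta(\mu)$ of the mean.

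The two routes trade off as follows. The equitable-colouring argument is elementary once Hajnal--Szemer\'edi is granted, but it needs colour classes of comparable expectation, which is exactly why the theorem statement assumes the $X_i$ are identically distributed (equal sizes then force equal means); it also produces the $\Theta(d)$ multiplicative prefactor, which is just the union bound over the $\chi$ classes. Your H\"older/MGF argument avoids both: it gives $2\exp\bigl(-\eps^2\mu/(3\chi)\bigr)$ with no $d$-dependent prefactor, and, as you correctly observe, never uses identical distribution --- only the range $[0,1]$ and the degree bound. So you in fact prove a statement slightly stronger and more general than the one the paper cites, and trivially implying it. The only cosmetic note is that in the cited statement $\mathbb{E}[T]$ is a typo for $\mathbb{E}[X]$, which your write-up silently (and correctly) fixes.
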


\paragraph{Model 1} 
We show that Model 2 maintains the theoretical guarantees of both the sparsifier algorithm and the mini-batch algorithm under uniform sampling, with a multiplicative $\Theta(1/n\phi)$ factor in the query complexity.

\begin{theorem}
    
\label{thm: model1}
    Assuming uniform sampling ($p_i=1/N$) and Model 1, the sparsifier construction of \cite{kraut} and our mini-batch algorithm achieve the same guarantees, with an $\Theta(1/n\phi)$ multiplicative factor in the query complexity.
\end{theorem}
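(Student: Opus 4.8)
The plan is to isolate the single quantity in the earlier analysis that distinguishes uniform from weighted sampling — the ``width'' $a$ appearing in the Chernoff step of Lemma~\ref{lem: concentration lemma}, and its analogue in the sparsifier analysis of \cite{kraut} — and to show that, under Model~1, this width for uniform sampling matches the weighted width up to a factor $\Theta(1/\phi)$ once we know $F_\myemptyset(e)=\Theta(N\phi)$ for every $e$. The remaining $\Theta(1/n)$ of the claimed factor then comes for free from $\sum_i p_i = 1$ (uniform) versus $\sum_i p_i \le n$ (weighted) in Lemma~\ref{lem: bound pi}.

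\textbf{Input-side concentration.} For a fixed $e\in E$, the variables $\{f^i_\myemptyset(e)\}_{i\in[N]}$ are identically distributed in $[0,1]$ (by the normalization), have dependency at most $d$ by condition~(2) of Model~1, and satisfy $\mathbb{E}[F_\myemptyset(e)] = \sum_i \mathbb{E}[f^i_\myemptyset(e)] \ge N\phi$ by condition~(1). Applying the bounded-dependency Chernoff bound (Theorem~\ref{thm:bounde chernoff}) with a constant relative deviation gives $\mathbb{P}[F_\myemptyset(e) < N\phi/2] \le \Theta(d)\exp(-\Theta(N\phi/d))$, which is at most $n^{-c-1}$ precisely because $N=\Omega(\tfrac{d}{\phi}\log(nd))$ — the $\log(nd)$ absorbing both the $\Theta(d)$ prefactor and a union bound over the $n$ choices of $e$. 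Hence the event $\mathcal E := \{F_\myemptyset(e) \ge N\phi/2 \text{ for all } e\in E\}$ holds w.h.p.\ over the Model~1 randomness, and on $\mathcal E$ we have $N \le 2F_\myemptyset(e)/\phi$ for every $e$ simultaneously.

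\textbf{Width bound and the mini-batch algorithm.} Condition on $\mathcal E$ and assume $\alpha < N$ (if $\alpha \ge N$ then $\alpha_i = 1$ for all $i$, $\hat F^j = F$ exactly, and there is nothing to prove). Then in Algorithm~\ref{alg: approx inc oracale} every $w_i = N/\alpha$, so in the proof of Lemma~\ref{lem: concentration lemma} the width satisfies $a = \max_i w_i f^i_S(e) = \tfrac{N}{\alpha}\max_i f^i_S(e) \le \tfrac{N}{\alpha} \le \tfrac{2F_\myemptyset(e)}{\alpha\phi}$, using $f^i_S(e) \le f^i_\myemptyset(e) \le 1$ and then $\mathcal E$. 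This is exactly the weighted bound $a \le F_\myemptyset(e)/\alpha$ with $\alpha$ replaced by $\alpha\phi/2$, so Lemma~\ref{lem: concentration lemma}, and therefore Theorem~\ref{thm:minibatch approx guarantees}, hold verbatim for uniform sampling after replacing $\alpha$ by $\alpha\phi/2$. Consequently, recovering the same approximate-incremental-oracle guarantees — and hence the same approximation ratios via Theorems~\ref{thm:meta approx known guarantees} and~\ref{thm:meta approx guarantees} — only requires scaling $\alpha$ up by $2/\phi$. Since under uniform sampling the expected batch size is $\alpha\sum_i p_i = \alpha$ rather than $\alpha n$, and the number of oracle evaluations during execution is (expected batch size)$\cdot\sum_j|A_j|$, the execution query complexity shrinks by exactly $\Theta(1/(n\phi))$, while preprocessing (none) and the approximation ratio are untouched.

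\textbf{Sparsifier, and the main obstacle.} For the sparsifier of \cite{kraut}, the same two ingredients — $\sum_i p_i = 1$, and, on $\mathcal E$, the per-coordinate width $N f^i_\myemptyset(e)/F_\myemptyset(e) \le 2/\phi$ — are precisely the quantities that control their sparsifier-size bound, so substituting them shrinks the sparsifier, and hence the execution query complexity (the sparsifier size times the $O(nk)$ evaluations of $\hat F$ made by greedy), by a $\Theta(1/(n\phi))$ factor; the relative-accuracy guarantee of the sparsifier is unchanged, so any algorithm run on it, greedy in particular, keeps its approximation ratio. The step I expect to be the main obstacle is the probability bookkeeping in the input-side concentration: keeping the Model~1 draw of the $f^i$'s and the algorithm's per-step sampling as cleanly separated layers (``w.h.p.\ over the input, the guarantee holds w.h.p.\ over the sampling''), and checking that the constant in the bounded-dependency Chernoff bound is compatible both with the hidden constant in $N = \Omega(\tfrac{d}{\phi}\log(nd))$ and with the downstream choice of $\alpha$; a secondary point is confirming, against \cite{kraut}, that their size bound is indeed governed by exactly this width parameter and by $\sum_i p_i$.
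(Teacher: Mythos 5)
Your proposal is correct and follows essentially the same route as the paper: both begin by applying the bounded-dependency Chernoff bound with $N=\Omega(\tfrac{d}{\phi}\log(nd))$ to conclude $F_\myemptyset(e)\geq N\phi/2$ for all $e$ w.h.p., and then use this to argue that uniform sampling matches weighted sampling up to a $\Theta(1/\phi)$ factor in $\alpha$, with the remaining $\Theta(1/n)$ coming from $\sum_i p_i$. The only cosmetic difference is that you bound the Chernoff width $a$ directly (via $f^i_S(e)\leq 1$ and $N\leq 2F_\myemptyset(e)/\phi$), whereas the paper first shows $p_i\leq 2/(\phi N)$ and then invokes the parenthetical remark at the end of Lemma~\ref{lem: concentration lemma} (and the analogous observation from \citet{rafiey2022sparsification} for the sparsifier) about running with a multiplicatively approximate $\{p_i\}$; these are equivalent ways of organizing the same calculation.
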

\begin{proof}

Assumption (1) of Model 1 implies that $\mathbb{E}[F_\myemptyset(e)] =\sum_{i\in [N]} \mathbb{E}[f^i_\myemptyset(e)] \geq N\phi$.
Applying the above Chernoff bound with $\eps=1/2, \mu = \mathbb{E}[F_\myemptyset(e)]$ we get that:
\begin{align*}
 \mathbb{P}\left[|F_\myemptyset(e) - \mathbb{E}[F_\myemptyset(e)]| \geq \mu/2 \right]
\leq \Theta(d)  \exp(-\Theta ( \mu/ d) ) \leq \Theta(d)  \exp(-\Theta ( N\phi/d) ) \leq 1/n^2
\end{align*}
Where the last inequality is due to the fact that $N =\Omega( (d/\phi) \log (nd))$. Finally, applying a union bound over all $e\in E$ we get that w.h.p $\forall e\in E, F_\myemptyset(e) \geq N\phi/2$. 

The above implies that w.h.p $p_i = \max_e f^i_\myemptyset(e)/ F_\myemptyset(e) \leq 2/\phi N$, which implies that uniform sampling approximates the weights of weighted sampling up to a $O(1/\phi)$-factor. 
It was already noted by \citet{rafiey2022sparsification} that when given a multiplicative approximation of $\set{p_i}$ the sparsifier construction goes through with a respective multiplicative increase in the size of the sparsifier. This observation also applies to the results of \citet{kraut} and for our results (Lemma~\ref{lem: concentration lemma}). 
\end{proof}

\paragraph{Empirical validation} We can empirically evaluate $\phi$ by computing $\max_e \frac{1}{N}\sum_i f^i_\myemptyset(e)$. The empirical values are as follows: CIFAR100: 0.34, FashionMNIST: 0.31, Uber pickup: $2.36 \times 10^{-4}$, Discogs: $4.10 \times 10^{-6}$. While Model 1 manages to explain the experimental results for some of the datasets, it appears that the model assumptions are a bit too strong.



\paragraph{Model 2} We show that Model 2 maintains the theoretical guarantees of the mini-batch algorithm under uniform sampling, with a multiplicative $\Theta(1/n\phi)$ factor in the query complexity.

\begin{lemma}
Under Model 2, item (2) of Theorem~\ref{thm:final res} still holds with a $\Theta(1/n\phi)$ multiplicative factor in the query complexity for the mini-batch algorithm with uniform sampling.
\end{lemma}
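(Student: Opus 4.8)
The plan is to mimic the proof of Theorem~\ref{thm: model1}, but with one essential modification: Model 2 controls only a single coordinate $e^{*}\in E$, so the clean reduction used there (``the uniform $p_i=1/N$ approximates the weighted $p_i$ up to a $1/\phi$ factor, hence Lemma~\ref{lem: concentration lemma} still applies'') is \emph{not} available, since Model 2 says nothing about $\max_{e}f^i_\myemptyset(e)/F_\myemptyset(e)$. Instead I would re-open the concentration argument of Section~\ref{sec: analysis} directly for uniform sampling, exploiting that under Model 2 the optimum value $F(S^{*})$ is forced to be large.

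First I would show $F(S^{*})=\Omega(N\phi)$ w.h.p. By Model 2 there is $e^{*}\in E$ with $\mathbb{E}[f^i_\myemptyset(e^{*})]\ge\phi$ for all $i$ and dependency at most $d$ among $\{f^i_\myemptyset(e^{*})\}_{i\in[N]}$, so $\mathbb{E}[F_\myemptyset(e^{*})]=\sum_{i\in[N]}\mathbb{E}[f^i_\myemptyset(e^{*})]\ge N\phi$; applying the bounded-dependency Chernoff bound (Theorem~\ref{thm:bounde chernoff}) with $\eps=1/2$ and using $N=\Omega((d/\phi)\log(nd))$ gives $F_\myemptyset(e^{*})\ge N\phi/2$ w.h.p. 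Since a singleton is a feasible solution (trivially for a cardinality constraint with $k\ge1$, and for a $p$-system provided $e^{*}$ lies in some independent set), monotonicity yields $F(S^{*})\ge F_\myemptyset(e^{*})\ge N\phi/2$, i.e.\ $N\le 2F(S^{*})/\phi$, w.h.p.

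Then I would re-prove the analogue of Theorem~\ref{thm:minibatch approx guarantees}(2) for uniform sampling. Under uniform sampling every nonzero weight equals $w_i=N/\alpha$ (assuming $\alpha<N$; otherwise $\hat F^{j}=F$ exactly), so in the notation of Lemma~\ref{lem: concentration lemma} the per-term bound becomes $a\le(N/\alpha)\max_i f^i_\myemptyset(e)\le N/\alpha$, by submodularity and the normalisation $f^i_\myemptyset(e)\in[0,1]$. Put $\gamma=\Theta(k)$ for a cardinality constraint (resp.\ $\gamma=\Theta(kp)$ for a $p$-system) as in the derivation of Theorem~\ref{thm:final res}(2), and set $c=\gamma/\phi$. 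Fixing $S_j$ and $e$ and splitting exactly as in the proof of Theorem~\ref{thm:minibatch approx guarantees}(2): if $F_{S_j}(e)\ge N/c$, a Chernoff bound with $\mu=F_{S_j}(e)$ gives $\mathbb{P}[\,|\hat F^{j}_{S_j}(e)-F_{S_j}(e)|\ge\eps F_{S_j}(e)\,]\le 2\exp(-\eps^{2}F_{S_j}(e)\alpha/3N)\le 2\exp(-\eps^{2}\alpha/3c)$, a multiplicative $(1-\eps)$-incremental oracle at $e$; if $F_{S_j}(e)<N/c$, a Chernoff bound with $\mu=N/c\ (>F_{S_j}(e))$ gives $\mathbb{P}[\,|\hat F^{j}_{S_j}(e)-F_{S_j}(e)|\ge\eps N/c\,]\le 2\exp(-\eps^{2}\alpha/3c)$, and since $\eps N/c\le 2\eps F(S^{*})/(\phi c)=2\eps F(S^{*})/\gamma$ this is (after a harmless rescaling of $\gamma$ by a constant) an additive $(\eps/\gamma)$-incremental oracle at $e$. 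In both cases choosing $\alpha=\Theta(c\eps^{-2}\log n)=\Theta(k\eps^{-2}\phi^{-1}\log n)$ drives the failure probability below $1/n^{3}$; a union bound over the at most $n^{2}$ pairs $(e,j)$ yields the hybrid oracle guarantee w.h.p., and feeding it into Theorem~\ref{thm:meta approx known guarantees} and Theorem~\ref{thm:meta approx guarantees} exactly as for Theorem~\ref{thm:final res}(2) gives the stated approximation ratios. For the query complexity, uniform sampling needs no preprocessing, and since $\sum_i p_i=1$ the execution-cost formula of Theorem~\ref{thm:minibatch approx guarantees} together with Lemma~\ref{lem: bound pi} gives $\alpha(\sum_i p_i)(\sum_j|A_j|)=O(\alpha k n)=O(k^{2}n\eps^{-2}\phi^{-1}\log n)$, which is the bound of Theorem~\ref{thm:final res}(2) multiplied by $\Theta(1/n\phi)$.

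The hard part is the additive case, and in particular the choice $\mu=N/c$ with $c=\gamma/\phi$: this is the uniform-sampling counterpart of taking $\mu=F_\myemptyset(e)/\gamma$ in the proof of Theorem~\ref{thm:minibatch approx guarantees}(2), and it is what lets the factor $N$ in $a\le N/\alpha$ cancel against the $N$ in $\mu$ (so $\alpha$ need only be $\Theta(\gamma\eps^{-2}\phi^{-1}\log n)$, the weighted $\alpha$ inflated by $1/\phi$) while the resulting additive error stays $\le 2\eps F(S^{*})/\gamma$ via $N\le 2F(S^{*})/\phi$. A secondary point to check --- as is implicit already in the proof of Theorem~\ref{thm:minibatch approx guarantees}(2) --- is that the per-element guarantee above is a \emph{mixture} of a multiplicative and an additive incremental oracle rather than a uniform additive one, and that this mixture is still enough for the greedy approximation analysis under both the cardinality and the $p$-system constraint.
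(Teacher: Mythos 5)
Your proof is correct and takes essentially the same route as the paper: both lower-bound the optimum by $F(S^*)\geq F_\myemptyset(e^*)\geq N\phi/2$ via the bounded-dependency Chernoff bound at the single smoothed coordinate $e^*$, bound $a\leq N/\alpha$ under uniform sampling, and rerun the two-case (multiplicative/additive) split of Theorem~\ref{thm:minibatch approx guarantees}(2) to obtain $\alpha=\Theta(\gamma\eps^{-2}\phi^{-1}\log n)$, i.e.\ the weighted $\alpha$ inflated by $1/\phi$. Your write-up simply makes explicit what the paper compresses into ``plugging this into the second case,'' and your caveat that $\{e^*\}$ must be feasible for $F(S^*)\geq F_\myemptyset(e^*)$ under a $p$-system is a fair detail the paper leaves implicit.
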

\begin{proof}
Similar to Theorem~\ref{thm: model1} it holds w.h.p that
$F_\myemptyset(e)\geq N\phi/2$, but now this is only guaranteed for a single element $e^*\in E$.
We can use this fact to lower bound the optimum solution w.h.p: $F_\myemptyset(S^*) \geq F_\myemptyset(e^*) \geq N\phi/2$. 

Recall $a = \max \{w_if^i_S(e)\}_{i\in J}$ from Lemma~\ref{lem: concentration lemma}, where $w_i = 1 / \alpha_i$ and $\alpha_i = \min\{1, \alpha  p_i$\}. Under uniform sampling it holds that $p_i=1/N$. We conclude that $a \leq N/\alpha \leq 2F_\myemptyset(e^*)/\phi\alpha$. We get an analogue to Lemma~\ref{lem: concentration lemma}:
\begin{align*}\mathbb{P}\left[|\hat{F}_S(e) - F_S(e)| \geq \eps \mu \right]
\leq 2  \exp \left(-\frac{\eps^2 \mu}{ 3a} \right) 
 \leq 2 \exp \left(-\frac{\eps^2 \alpha \phi \mu}{6 F_\myemptyset(e^*) } \right) 
\end{align*}
 
Plugging this into the second case (non-bounded curvature) of Theorem~\ref{thm:minibatch approx guarantees} we get that the same result still holds with an additional multiplicative $1/\phi$ factor in $\alpha$.
\end{proof}

\paragraph{Empirical validation} Under Model 2 we can empirically evaluate $\phi$ by computing $\min_e \frac{1}{N}\sum_i f^i_\myemptyset(e)$. The empirical values are as follows: CIFAR100: 0.38, FashionMNIST: 0.35, Uber pickup: $0.61$, Discogs: $0.13$. We conclude that $\phi=\Theta(1)$ and indeed Model 2 is able to explain the empirical success of the uniform mini-batch algorithm on all datasets.

\paragraph{Discussion} We observe that Model 2 manages to explain why uniform sampling outperforms in our experimental results. The empirical $\phi$ values are constant with respect to $n$. This means that the query complexity is actually less by about a $\Theta(n)$ factor for the uniform sampling case. We conclude that, given its speed and simplicity, the uniform mini-batch algorithm should be the first choice when tackling massive real-world datasets. 
\newpage

\bibliography{paper}
\bibliographystyle{iclr2025_conference}

\appendix
\section{Sparsifier background and related work}
\label{sec: sparsifiers}
\citet{rafiey2022sparsification} were the first to showe how to construct a \emph{sparsifier} for $F$. That is, given a parameter $\eps >0$ they show how to find a vector $w\in \mathbb{R}^N$ such that the number of non-zero elements in $w$ is small in expectation and the function $\hat{F} = \sum_{i=1}^N w_i f^i$ satisfies with high probability (w.h.p)\footnote{Probability at least $1-1/n^c$ for an arbitrary constant $c>1$. The value of $c$ does affect the asymptotics of the results we state (including our own).} that $\forall S \subseteq E, (1-\eps)F(S) \leq \hat{F}(S) \leq (1+\eps) F(S)$.

Specifically, every $f^i$ is sampled with probability $\alpha_i$ proportional to $p_i=\max_{S\subseteq E, F(S)\neq 0} \frac{f^i(S)}{F(S)}$. If it is sampled, it is included in the sparsifier with weight $1/\alpha_i$, which implies that $\Expc{w_i}=1$. While calculating the $p_i$'s exactly requires exponential time, \citet{rafiey2022sparsification} make do with an approximation, which can be calculated using interior point methods \citep{BaiIWB16}.

\citet{rafiey2022sparsification} show that if all $f^i$'s are non-negative and monotone\footnote{\citet{rafiey2022sparsification} also present results for non-monotone functions, however, \citet{standa2023} point out an error in their calculation and note that the results only hold when all $f^i$'s are monotone.}, the above sparsifier can be constructed by an algorithm that requires $poly(N)$ oracle evaluations and the sparsifier will have expected size $O(\eps^{-2}Bn^{2.5} \log n)$, where $B = \max_{i\in[N]} B_i$ and $B_i$ is the number of extreme points in the base polyhedron of $f^i$. They extend their results to matroid constraints of rank $r$ and show that a sparsifier with expected size $O(\eps^{-2}Brn^{1.5} \log n)$ can be constructed. 

For the specific case of a cardinality constraint $k$, this implies a sparsifier of expected size $O(\eps^{-2} Bkn^{1.5}\log n)$ can be constructed using $poly(N)$ oracle evaluations. The sparsifier construction is treated as a \emph{preprocessing step}, and therefore the actual execution of \greedy on the sparsifier requires only $O(\eps^{-2} Bk^2n^{2.5}\log n)$ oracle evaluations to get a $(1-1/e-\eps)$ approximation. This is an improvement over \greedy when $N \gg n,B$. 

Recently, \citet{standa2023} showed improved results for the case of \emph{bounded curvature}.
The \emph{curvature} of a submodular function $F$ is defined as $c = 1-\min_{S \subseteq E, e\in E \setminus S} \frac{F_S(e)}{F_{\myemptyset}(e)}$. We say that $F$ has \emph{bounded-curvature} if $c < 1$. Submodular functions with bounded curvature \citep{ConfortiC84} offer a balance between modularity and submodularity, capturing the essence of diminishing returns without being too extreme. 

They show that when the curvature of all $f^i$'s and of $F$ is constant it is possible to reduce the preprocessing time to $O(Nn)$ oracle queries and to reduce the size of the sparsifier by a factor of $\sqrt{n}$. Furthermore, their results extend to the much more general case of \emph{$k$-submodular functions}.
While this significantly improves over the number of oracle calls compared to \citep{rafiey2022sparsification}, the runtime of the preprocessing step depends on $\log \left( \max_{i\in [N]}\frac{\max_{e\in E} f^i_\myemptyset(e)}{\min_{e\in E, f^i_\myemptyset(e)>0} f^i_\myemptyset(e)} \right)$.

Note that the $B$ factor in \cite{standa2023,rafiey2022sparsification} can be exponential in $n$ in the worst-case.

The current state of the art is due to \citet{kraut} where they show that by sampling according to $p_i=\max_{e \in E, F(e)\neq 0} \frac{f^i(e)}{F(e)}$ it is possible to get both a fast sparsifier construction time of $O(Nn)$ and a small sparsifier size of $O(\frac{n^3}{\eps^2})$. Their analysis also implies that if the solution size is bounded by $k$ (e.g., a cardinality constraint) a sparsifier of size $O(\frac{nk^2}{\eps^2})$ is sufficient. They also present results for general submodular functions, however we only focus on their results for monotone functions which are relevant for this paper.

\section{$p$-systems}
\label{sec: psys}
\paragraph{$p$-systems} The concept of \( p \)-systems offers a generalized framework for understanding independence families, parameterized by an integer \( p \). We can define a \( p \)-system in the context of an independence family \( \mathcal{I} \subseteq 2^E\) and \( E' \subseteq E \). Let \( \mathcal{B}(E') \) be the maximal independent sets within \( \mathcal{I}  \) that are also subsets of \( E' \). Formally, $\mathcal{B}(E') = \{A \in \mathcal{I} | A \subseteq E' \text{ and no } A' \in \mathcal{I} \text{ exists s.t } A \subset A' \subseteq E' \}$.
A distinguishing characteristic of a \( p \)-system is that for every \( E' \subseteq E \), the ratio of the sizes of the largest to the smallest sets in \( \mathcal{B}(E') \) does not exceed \( p \): $\frac{\max_{A \in \mathcal{B}(E')} |A|}{\min_{A \in \mathcal{B}(E')} |A|} \leq p$.

The significance of \( p \)-systems lies in their ability to encapsulate a variety of combinatorial structures. For instance, when the intersection of \( p \) matroids can be described using \( p \)-systems. In graph theory, the collection of matchings in a standard graph can be viewed as a 2-system. Extending this to hypergraphs, where edges might have cardinalities up to \( p \), the set of matchings therein can be viewed as a \( p \)-system. 

\paragraph{The greedy algorithm for $p$-systems} Formally, the optimization problem can be expressed as:
$\max_{S \in \mathcal{I}} F(S)$ where the pair \( (E, \mathcal{I}) \) characterizes a \( p \)-system and \( F : 2^E \rightarrow \mathbb{R}^+ \) denotes a non-negative monotone submodular set function. 
It was shown by \citet{NemhauserWF78} that the natural greedy approach achieves an optimal approximation ratio of \( \frac{1}{p + 1} \). Setting $A_j = \set{e \mid S_j +e \in \mathcal{I}}$ (i.e., $S_j$ remains an independent set after adding $e$) in Algorithm~\ref{alg: meta minibatch greedy} we get the greedy algorithm of \citet{NemhauserWF78}. Note that for general $p$-systems it might be that $k=n$, however, there are very natural problems where $k \ll n$. For example, for maximum matching $E$ corresponds to all edges in the graph, which can be quadratic in the number of nodes, while the solution is at most linear in the number of nodes.  

\section{Missing proofs} 
\paragraph{Proof of Lemma~\ref{lem: bound pi}}     We start by showing that $\sum_{i=1}^N p_i \leq n$.
    Let us divide the range $[N]$ into 
    $$A_e = \set{i\in N \mid e = \argmax_{e'\in E, F_\myemptyset(e')\neq 0} \frac{f^i_\myemptyset(e')}{F_\myemptyset(e')}}$$ If 2 elements in $E$ achieve the maximum value for some $i$, we assign it to a single $A_e$ arbitrarily. 
\begin{align*}
\sum_{i=1}^N p_i = \sum_{i=1}^N \max_{e\in E} \frac{f^i_\myemptyset(e)}{F_\myemptyset(e)} = 
\sum_{e\in E} \sum_{i\in A_e} \frac{f^i_\myemptyset(e)}{F_\myemptyset(e)} =  \sum_{e\in E} \frac{\sum_{i\in A_e} f^i_\myemptyset(e)}{F_\myemptyset(e)} \leq \sum_{e\in E} 1 \leq n    
\end{align*}
Let $X_i$ be an indicator variable for the event $w_i > 0$.
    We are interested in $\sum_{i=1}^N \mathbb{E}[X_i]$. It holds that:
    \[
     \sum_{i=1}^N \mathbb{E}[X_i] = \sum_{i=1}^N \alpha_i \leq \sum_{i=1}^N \alpha p_i = \alpha \sum_{i=1}^N p_i \leq \alpha n 
    \]

\paragraph{Proof of Theorem~\ref{thm:meta approx guarantees}}
\label{apx: proofs}
Theorem~\ref{thm:meta approx guarantees} directly follows from the two lemmas below. 

\begin{lemma}
\label{lem: unbounded curv approx}
     Let $\eps' \leq \eps/2k$. Algorithm~\ref{alg: meta minibatch greedy} with an additive $\eps'$-approximate incremental oracle achieves a $(1-1/e-\eps)$-approximation under a cardinality constraint $k$.
\end{lemma}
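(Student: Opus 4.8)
The plan is to adapt the classical $(1-1/e)$ analysis of the greedy algorithm \citep{NemhauserWF78}, carrying an additive slack term of size $2\eps' F(S^*)$ through the standard recursion. Fix an optimal solution $S^*$ with $\size{S^*}\le k$. Under a cardinality constraint $A_j = E\setminus S_j$, so the algorithm never terminates early and every element of $S^*\setminus S_j$ is a valid candidate at iteration $j$.

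First I would record the usual ``greedy gain'' inequality. By monotonicity of $F$ and then submodularity (telescoping the marginals over $S^*\setminus S_j$, all of which are nonnegative),
\begin{align*}
F(S^*) \;\le\; F(S^*\cup S_j) \;\le\; F(S_j) + \sum_{e\in S^*\setminus S_j} F_{S_j}(e) \;\le\; F(S_j) + k\cdot \max_{e\in A_j} F_{S_j}(e),
\end{align*}
so $\max_{e\in A_j} F_{S_j}(e) \ge \tfrac1k\big(F(S^*)-F(S_j)\big)$. Next I would use the additive $\eps'$-approximate oracle to control the element actually chosen. Let $e^\star\in A_j$ attain $\max_{e\in A_j}F_{S_j}(e)$. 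Since $e_j = \arg\max_{e\in A_j}\hat F^j_{S_j}(e)$ we have $\hat F^j_{S_j}(e_j)\ge \hat F^j_{S_j}(e^\star)$, and combining this with $F_{S_j}(e_j)\ge \hat F^j_{S_j}(e_j)-\eps' F(S^*)$ and $\hat F^j_{S_j}(e^\star)\ge F_{S_j}(e^\star)-\eps' F(S^*)$ gives $F_{S_j}(e_j)\ge \max_{e\in A_j}F_{S_j}(e) - 2\eps' F(S^*) \ge \tfrac1k\big(F(S^*)-F(S_j)\big) - 2\eps' F(S^*)$.

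Writing $\delta_j := F(S^*) - F(S_j)$, the last bound reads $\delta_{j+1}\le (1-\tfrac1k)\delta_j + 2\eps' F(S^*)$. Unrolling from $\delta_1 = F(S^*)$ (using $F(\emptyset)=0$), together with $\sum_{i=0}^{k-1}(1-\tfrac1k)^i \le k$ and $(1-\tfrac1k)^k\le e^{-1}$, yields
\begin{align*}
\delta_{k+1} \;\le\; e^{-1}F(S^*) + 2k\eps'\, F(S^*),
\end{align*}
i.e.\ $F(S_{k+1}) = F(S^*) - \delta_{k+1} \ge (1 - 1/e - 2k\eps')F(S^*)$. Plugging in the hypothesis $\eps'\le \eps/2k$ gives $F(S_{k+1}) \ge (1-1/e-\eps)F(S^*)$, as claimed.

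The argument is essentially routine; the only delicate point is the bookkeeping of the additive errors. One must check that the per-step error $2\eps' F(S^*)$, once summed against the geometric decay coming from the $(1-1/k)$ contraction, contributes only $2k\eps' F(S^*)$ in total and not something larger — this is precisely why the oracle accuracy must scale like $1/k$, and it is the reason the additive oracle is calibrated against $F(S^*)$ rather than against the (much smaller) local marginals $F_{S_j}(e)$.
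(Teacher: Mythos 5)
Your proof is correct and follows essentially the same route as the paper's: bound $\max_{e\in A_j} F_{S_j}(e) \ge \tfrac1k(F(S^*)-F(S_j))$ via monotonicity and telescoping submodularity, convert this into a lower bound on $F_{S_j}(e_j)$ by paying a $2\eps' F(S^*)$ additive cost (once for overestimating $\hat F^j_{S_j}(e_j)$, once for underestimating $\hat F^j_{S_j}(e^\star)$), and unroll the resulting recursion. The only cosmetic difference is that you track the deficit $\delta_j = F(S^*)-F(S_j)$ and isolate the additive error as a separate $2k\eps' F(S^*)$ term, whereas the paper folds $2\eps' k$ into a modified target $(1-\eps)F(S^*)$ before unrolling; both yield $(1-1/e-\eps)$ once $\eps' \le \eps/2k$ is plugged in.
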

\begin{proof}
Let $S^*$ be some optimal solution for $F$.
We start by proving that the following holds for every $j\in[k]$:
 \[
    F(S_{j+1}) - F_(S_{j}) \geq  \frac{1}{k}((1-\eps)F(S^*) - F(S_j))
    \]

Fix some $j\in [k]$ and let $S^* \setminus S_j = \set{e^*_1,\dots, e^*_\ell}$ where $\ell \leq k$. Let $S^*_t = \set{e^*_1,\dots, e^*_t} $, and $S^*_0 = \emptyset$. 
    Let us first use submodularity and monotonicity to upper bound $F(S^*)$. 
    \begin{align*}
        &F(S^*) \leq F(S^* + S_j) 
        \\&= F(S_j) + \sum_{t=1}^{\ell}[ F(S_j + S^*_t)- F(S_j + S^*_{t-1})]
        \\ &\leq  F(S_j) + \sum_{t=1}^{\ell} F_{S_j}( e^*_t) \leq F(S_j) + \sum_{t=1}^{\ell} \max_{e\in E\setminus S_j} F_{S_j}(e) 
        \\ & \leq F(S_j) + k \max_{e\in E\setminus S_j} F_{S_j}(e) \\ &\leq F(S_j) + k (\max_{e\in E \setminus S_j} \hat{F}^j_{S_j}(e) + \eps' F(S^*))
    \end{align*}
    Where the last inequality is due to the fact that $\hat{F}^j_{S_j}$ is an additive $\eps'$-approximate incremental oracle.
    
    Noting that $e_j = \argmax_{e\in E \setminus S_{j}} \hat{F}^j_{S_j}(e)$ we get that:
    \begin{align*} 
    &F(S^*) \leq F(S_j) + k(\hat{F}^j_{S_j}(e_j) + \eps' F(S^*))\\
    & \implies \hat{F}^j_{S_j}(e_j) \geq \frac{1}{k}((1-\eps' k)F(S^*) - F(S_j))
    \end{align*}
    The above lower bounds the progress on the $j$-th mini-batch. Now, let us bound the progress on $F$. Again, we use the fact that $\hat{F}^j_{S_j}$ is an additive $\eps'$-approximate incremental oracle.
    \begin{align*} 
    &F(S_{j+1}) - F(S_{j}) \geq \hat{F}^j_{S_j}(e_j) - \eps' F(S^*)  
    \\ &\geq \frac{1}{k}((1-\eps' k)F(S^*) - F(S_j)) -  \eps' F(S^*) \\ &\geq \frac{1}{k}((1-2\eps' k)F(S^*) - F(S_j))
    \end{align*}

    Finally, using the fact that $\eps' \leq \eps/2k$ we get:
    \[
    F(S_{j+1}) - F(S_{j}) \geq  \frac{1}{k}((1-\eps)F(S^*) - F(S_j))
    \]

Rearranging, the result directly follows using standard arguments.
\begin{align*}
        &F(S_{k+1}) > \frac{(1-\eps)}{k}F(S^*) + (1- \frac{1}{k })F(S_{k}) 
        \\ &\geq \frac{(1-\eps)}{k}F(S^*)(\sum_{i=0}^k (1-\frac{1}{k})^i) + F(\emptyset)
    \\ &\geq F(S^*)\frac{(1-\eps)(1-\frac{1}{k})^k}{k(1-(1-\frac{1}{k}))} = (1-\eps)(1- \frac{1}{k})^k F(S^*) 
    \\& \geq (1-\eps)(1-1/e) F(S^*) \geq (1-1/e-\eps) F(S^*)
\end{align*}

\end{proof}

\begin{lemma}
\label{lem: psys approx}
     Let $\eps' \leq \eps/2kp$. Algorithm~\ref{alg: meta minibatch greedy} with an additive $\eps'$-approximate incremental oracle achieves a $(\frac{1-\eps}{1+p})$-approximation under a $p$-system constraint.
\end{lemma}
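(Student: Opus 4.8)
The plan is to mirror the proof of Lemma~\ref{lem: unbounded curv approx}, replacing the cardinality-specific counting step (``all of $S^*\setminus S_j$ is available at step $j$, and there are at most $k$ such elements'') by the classical combinatorial structure of greedy on a $p$-system underlying the $\frac{1}{p+1}$-bound of \citet{NemhauserWF78}. Fix an optimal independent set $S^*$, and let $S_1=\emptyset\subset S_2\subset\cdots\subset S_{m+1}$ be the greedy chain, where $e_j$ is the element chosen at step $j$ (so $S_{j+1}=S_j+e_j$); for a $p$-system, Algorithm~\ref{alg: meta minibatch greedy} runs until $A_{m+1}=\emptyset$, returning the maximal independent set $S_{m+1}$ with $m\le k$. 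I would use the standard exchange property: there is a partition $S^*\setminus S_{m+1}=P_1\cup\cdots\cup P_m$ such that (i) $|P_j|\le p$ for every $j$, and (ii) every $e\in P_j$ is a legal candidate at step $j$, i.e. $e\in A_j$, so $\hat{F}^j_{S_j}(e_j)\ge\hat{F}^j_{S_j}(e)$ by the greedy choice of $e_j$.

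Given the partition, the rest is a single ``linear'' pass. Submodularity and monotonicity give
\[
F(S^*)\ \le\ F(S^*\cup S_{m+1})\ \le\ F(S_{m+1})+\sum_{e\in S^*\setminus S_{m+1}}F_{S_{m+1}}(e)\ \le\ F(S_{m+1})+\sum_{j=1}^{m}\sum_{e\in P_j}F_{S_j}(e),
\]
using $F_{S_{m+1}}(e)\le F_{S_j}(e)$ since $S_j\subseteq S_{m+1}$. For $e\in P_j\subseteq A_j$, the additive $\eps'$-approximate oracle together with the greedy choice of $e_j$ yields $F_{S_j}(e)\le\hat{F}^j_{S_j}(e)+\eps' F(S^*)\le\hat{F}^j_{S_j}(e_j)+\eps' F(S^*)\le F_{S_j}(e_j)+2\eps' F(S^*)$, and $F_{S_j}(e_j)=F(S_{j+1})-F(S_j)$. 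Substituting, and using $|P_j|\le p$, $\sum_{j=1}^m\big(F(S_{j+1})-F(S_j)\big)=F(S_{m+1})$, and $\sum_{j=1}^m|P_j|\le pm\le pk$,
\[
F(S^*)\ \le\ F(S_{m+1})+p\sum_{j=1}^m\big(F(S_{j+1})-F(S_j)\big)+2\eps' F(S^*)\sum_{j=1}^m|P_j|\ \le\ (1+p)F(S_{m+1})+2kp\,\eps' F(S^*).
\]
Finally, $\eps'\le\eps/2kp$ gives $2kp\,\eps' F(S^*)\le\eps F(S^*)$, hence $(1-\eps)F(S^*)\le(1+p)F(S_{m+1})$, i.e. $F(S_{m+1})\ge\frac{1-\eps}{1+p}F(S^*)$, which is the claim since Algorithm~\ref{alg: meta minibatch greedy} returns $S_{m+1}$.

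The only nonroutine point is stating the $p$-system exchange lemma in exactly this form — each block $P_j$ has size at most $p$ (a per-step bound, not merely the cumulative $|P_1\cup\cdots\cup P_j|\le pj$) and consists of elements feasible at step $j$ — and checking that the stopping rule of Algorithm~\ref{alg: meta minibatch greedy} indeed produces a maximal independent set, so the lemma applies; both are classical. With the per-step bound $|P_j|\le p$ in hand, $\sum_j|P_j|\big(F(S_{j+1})-F(S_j)\big)$ collapses directly to $p\,F(S_{m+1})$ by non-negativity of the gains, so — unlike a naive argument through Abel summation with the cumulative bound — no (approximate) monotonicity of the greedy gains is needed; and there is a single type of error term, a uniform slack of $2\eps' F(S^*)$ per element of $S^*\setminus S_{m+1}$, which is why $\gamma=2kp$ suffices. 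Together with Lemma~\ref{lem: unbounded curv approx} this proves Theorem~\ref{thm:meta approx guarantees}.
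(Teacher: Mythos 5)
Your proof is correct and follows essentially the same route as the paper: both rest on the $p$-system exchange lemma (a partition of the optimum with per-block size at most $p$ and each block feasible at the corresponding greedy step), and both charge a uniform $2\eps'F(S^*)$ additive slack to each charged element, yielding the same final bound $(1-2kp\eps')/(1+p)$. The one substantive difference is that the paper does not cite the exchange lemma as classical but constructs the partition explicitly (a backward peeling argument defining $T_k=S^*$, $B_j=\{e\in T_j: S_j+e\in\mathcal{I}\}$, $S^*_j\subseteq B_j$ of size $\min\{|B_j|,p\}$) and proves $|S^*_1|\le p$ by induction on $|T_{k-j}|\le (k-j)p$; if you were to fill in your ``classical'' step, you would reproduce precisely that construction.
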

\begin{proof}
    Let $S^*$ be some optimal solution for $F$.
Assume without loss of generality that the solution returned by the algorithm consists of $k$ elements $S_{k+1} = \set{e_1,\dots,e_k}$.

 We show the existence of a partition $S^*_1,S^*_2,\dots,S^*_k$ of $S^*$ such that $F_{S_{j}}(e_j)\geq \frac{1}{p}F_{S_{k+1}}(S^*_j) - 2\eps' F(S^*)$. Note, we allow some of the sets in the partition to be empty. 

Define $T_k = S^*$. For $j=k,k-1,...,2$ execute: Let $B_j = \set{e\in T_j \mid S_{j} +e\in \mathcal{I}}$. If $\size{B_j} \leq p$ set $S^*_j = B_j$; else pick an arbitrary $S^*_j \subset B_j$ with $\size{S^*_j}=p$. Then set $T_{j-1} = T_j \setminus S^*_j$ before decreasing $j$. After the loop set $S^*_1 = T_1$. It is clear that for $j=2,...,k, \size{S^*_j} \leq p$. 

We prove by induction over $j=0,1,...,k-1$ that $\size{T_{k-j}}\leq (k-j)p$. For $j=0$, when the greedy algorithm stops, $S_{k+1}$ is a maximal independent set contained in $E$, therefore any independent set (including $T_k = S^*$) satisfies $\size{T_k} \leq p\size{S_{k+1}} = pk$. We proceed to the inductive step for $j>0$. There are two cases: (1) $\size{B_{k-j+1}} >p$, which implies that $\size{S^*_{k-j+1}} = p$ and using the induction hypothesis we get that $\size{T_{k-j}}=\size{T_{k-j+1}} - \size{S^*_{k-j+1}} \leq (k-j+1)p-p=(k-j)p$. (2) $\size{B_{k-j+1}} \leq p$, it holds that $T_{k-j} = T_{k-j+1}\setminus B_{k-j+1}$. Let $Y=S_{k-j+1}+ T_{k-j}$. Due to the definition of $B_{k-j+1}$ it holds that $S_{k-j+1}$ is a maximal independent set in $Y$. It holds that $T_{k-j}$ is independent and contained in $Y$, therefore $\size{T_{k-j}} \leq p\size{S_{k-j+1}} = p(k-j)$.

Finally, we get that $\size{T_1} = \size{S^*_1} \leq p$. By construction it holds that $\forall j\in [k], \forall e\in S^*_{j}, S_{j}+e\text{ is independent}$. From the choice made by the greedy algorithm and the fact that $\hat{F}^j_{S_j}$ is an additive $\eps'$-approximate incremental oracle it follows that for each $e\in S^*_j$:
\begin{align*}
  &F_{S_j}(e_j) \geq \hat{F}^j_{S_{j}}(e_j) - \eps' F(S^*)   
  \\ &\geq \hat{F}^j_{S_{j}}(e) - \eps' F(S^*) \geq F_{S_{j}}(e) - 2\eps' F(S^*)  
\end{align*}

Hence,
\begin{align*}
 &\size{S^*_j}F_{S_{j}}(e_j) \geq \sum_{e\in S^*_j} (F_{S_{j}}(e) - 2\eps' F(S^*)) 
\\ & \geq F_{S_j}(S^*_j) - 2\eps' \size{S^*_j} F(S^*) \geq F_{S_{k+1}}(S^*_j) - 2\eps' \size{S^*_j} F(S^*)    
\end{align*}

Using submodularity in the last two inequalities.

For all $j\in \set{1,2,...,k}$ it holds that $\size{S^*_j} \leq p$, and thus $F_{S_{j}}(e_j) \geq \frac{1}{p} F_{S_{k+1}}(S^*_j) - 2\eps' F(S^*)$. Using the partition we get that:
\begin{align*}
& F(S_{k+1})\geq \sum_{j=1}^k F_{S_{j}}(e_j) \geq \sum_{j=1}^k (\frac{1}{p}F_{S_{k+1}}(S^*_j) - 2\eps' F(S^*))
\\ &\geq \frac{1}{p}F_{S_{k+1}}(S^*)- 2\eps' kF(S^*) 
\\ & \geq \frac{1}{p}(F(S^*) - F(S_{k+1})) - 2\eps' kF(S^*)    
\end{align*}
Where the second to last inequality is due to submodularity and the last is due to monotonicity.
Rearranging we get that:
\[
F(S_{k+1}) \geq \frac{(1-2p\eps' k)}{p+1} F(S^*)
\]
As $\eps' < \frac{\eps}{2pk}$ we get the desired result. 
\end{proof}

\section{Additional experiments}
\label{sec: additional experiments}
\begin{figure}[htbp]
\includegraphics[width=\linewidth]{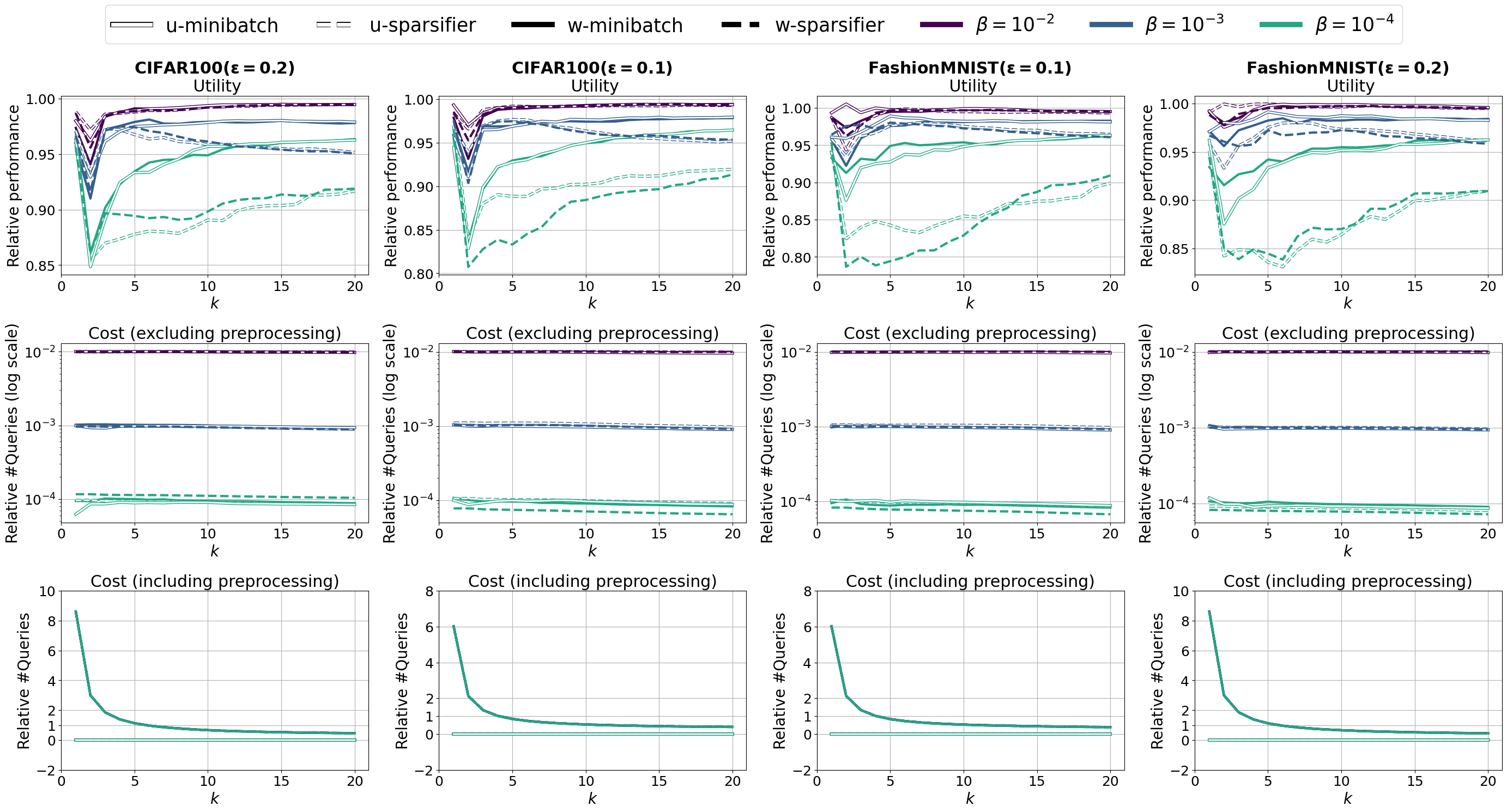}
\caption{Sparsifier and mini-batch compared with stochastic-greedy for $\eps=0.1$ and $\eps=0.2$.}
\label{fig: comparative_eps}
\end{figure}


\end{document}